\documentclass[10pt,twocolumn,letterpaper]{article}

\usepackage{cvpr}              

\usepackage{amsmath, amsthm, amssymb}
\usepackage{lineno}
\usepackage{makecell}
\newtheorem{theorem}{Theorem}
\newtheorem{lemma}{Lemma}
\newtheorem{assumption}{Assumption}

\newtheorem{remark}{Remark}
\usepackage{mathtools}
\usepackage[ruled]{algorithm2e} 
\usepackage{multirow} 

\newcommand{\norm}[1]{\left\| #1 \right\|}
\newcommand{\inner}[2]{\left\langle #1, #2 \right\rangle}

\definecolor{cvprblue}{rgb}{0.21,0.49,0.74}
\usepackage[pagebackref,breaklinks,colorlinks,allcolors=cvprblue]{hyperref}

\def\paperID{33300} 
\def\confName{CVPR}
\def\confYear{2026}

\title{Submodel Extraction for Efficient and Personalized Federated Learning via Optimal Transport}

\author{
    \textbf{Zheng Jiang}$^{1}$,
    \textbf{Nan He}$^{2}$,
    \textbf{Yiming Chen}$^{1}$,
    \textbf{Lifeng Sun}$^{1,3}$\thanks{Corresponding Author.}
    \\
    $^1$Tsinghua University \,
    $^2$Beijing University of Technology \\
    $^3$Key Laboratory of Pervasive Computing, Ministry of Education \\
    \texttt{jz24@mails.tsinghua.edu.cn, sunlf@tsinghua.edu.cn}
}

\begin{document}
\maketitle
\begin{abstract}
Federated Learning (FL) enables collaborative model training while preserving data privacy, but its practical deployment is hampered by system and statistical heterogeneity. While federated network pruning offers a path to mitigate these issues, existing methods face a critical dilemma: server-side pruning lacks personalization, whereas client-side pruning is computationally prohibitive for resource-constrained devices. Furthermore, the pruning process itself induces significant parametric divergence among heterogeneous submodels, destabilizing training and hindering global convergence. 
To address these challenges, we propose \textbf{SubFLOT}, a novel framework for server-side personalized federated pruning. SubFLOT introduces an Optimal Transport-enhanced Pruning (OTP) module that treats historical client models as proxies for local data distributions, formulating the pruning task as a Wasserstein distance minimization problem to generate customized submodels without accessing raw data. Concurrently, to counteract parametric divergence, our Scaling-based Adaptive Regularization (SAR) module adaptively penalizes a submodel's deviation from the global model, with the penalty's strength scaled by the client's pruning rate. Comprehensive experiments demonstrate that SubFLOT consistently and substantially outperforms state-of-the-art methods, underscoring its potential for deploying efficient and personalized models on resource-constrained edge devices.
\end{abstract}    
\section{Introduction}
\label{sec:intro}
Federated Learning (FL)~\cite{mcmahan2017communication} is a promising distributed machine learning paradigm that enables collaborative training across multiple devices while preserving data confidentiality. 
However, traditional FL methods encounter significant challenges in practical deployment: (1) System Heterogeneity. The available system resources usually vary among different devices, which restricts the feasible size of the shared models, resulting in training delays and resource underutilization. (2) Statistical Heterogeneity. The non-IID nature of local data distributions often manifests as feature shifts and label imbalance, inevitably leading to performance degradation and training instability~\cite{yang2023efficient,he2025dynfed,chen2024fedawa}.

To navigate these impediments, federated network pruning has been recognized as a potent strategy, offering the dual benefits of computational efficiency and model personalization~\cite{Diao2020HeteroFLCA,jiang2022fedmp,kim2022depthfl}. By allowing individual clients to train sparse, architecture-agnostic submodels tailored to their specific hardware capabilities, this approach markedly reduces both computational and communication overheads. Furthermore, by leveraging data-dependent pruning criteria, it facilitates the creation of personalized models that are better adapted to local data distributions, thereby mitigating the adverse effects of statistical heterogeneity.

However, the efficacy of existing federated pruning methodologies is constrained by two critical, unresolved issues. The first is a central dilemma concerning the \textit{locus of the pruning decision}, which creates a stark trade-off between server-enforced uniformity and client-driven personalization~\cite{yi2024fedp3}. On one hand, server-side global pruning strategies apply a uniform compression policy but remain oblivious to local data specificities due to privacy constraints, thus failing to achieve meaningful personalization. On the other hand, client-side local pruning, often following a ``train-prune-finetune" paradigm~\cite{frankle2018lottery}, can yield highly customized models. Yet, this approach imposes prohibitive computational burdens on resource-constrained clients, which must initially handle the full-sized model. This deadlock exposes a fundamental research question: \textit{How to achieve personalized pruning on the server without direct access to local data, while simultaneously addressing system and statistical heterogeneity?}

Secondly, the act of pruning can paradoxically exacerbate heterogeneity by inducing significant \textit{parametric divergence} among client submodels. As observed in prior works~\cite{Diao2020HeteroFLCA,deng2022tailorfl}, the weight distributions of submodels diverge from the global model in a manner correlated with their sparsity. Specifically, submodels subjected to higher pruning rates tend to develop weight magnitudes of a larger scale, causing them to deviate substantially from the parametric distribution of their less-pruned counterparts and the global model. This pruning-induced parametric drift not only destabilizes local training dynamics but also hampers the convergence of the global model during aggregation. This leads to a second pivotal research question: \textit{How to formulate a local objective that adaptively regularizes submodel divergence to harmonize training dynamics?}

To overcome these limitations, we propose a novel method \textit{\textbf{Sub}model Extraction for Efficient and Personalized \textbf{F}ederated \textbf{L}earning via \textbf{O}ptimal \textbf{T}ransport} (\textbf{SubFLOT}) to holistically address system and statistical heterogeneity. 
As illustrated in Figure~\ref{fig:pipeline}, SubFLOT introduces an Optimal Transport-enhanced Pruning (OTP) module to achieve server-side personalization. Recognizing that direct access to local data is prohibited, OTP astutely leverages historical client models as high-fidelity proxies for local data distributions. By minimizing the Wasserstein distance between the global model and these historical models, OTP computes a transport plan that guides a pruning process tailored to each client's data. Remarkably, this mechanism exhibits a powerful duality: during aggregation, it is repurposed as the OT-enhanced Aggregation (OTA) module to align the parameter spaces of disparate submodels.
Concurrently, to counteract the pruning-induced parameter divergence, SubFLOT incorporates a Scaling-based Adaptive Regularization (SAR) module. SAR imposes an adaptive penalty on a submodel's deviation from the global model, with the penalty's intensity dynamically scaled by the client's pruning rate. This unique design stabilizes the local training of heavily pruned models and accelerates global convergence.

Our principal contributions are summarized as follows:
\begin{itemize}
    \item We propose SubFLOT, to the best of our knowledge, the first framework to systematically address server-side personalized pruning while concurrently managing the dual challenges of feature-space and parameter-space heterogeneity in federated submodel training.
    \item We introduce a novel Optimal Transport-driven methodology that unifies personalized pruning and heterogeneous aggregation into a coherent parameter-space alignment problem, significantly enhancing model performance and personalization.
    \item We design a Scaling-based Adaptive Regularization (SAR) module that dynamically constrains submodel divergence based on the pruning ratio, fostering more stable local training and accelerating global convergence.
    \item We conduct extensive experiments across diverse datasets and non-IID settings, demonstrating that SubFLOT consistently and substantially outperforms state-of-the-art methods, highlighting its efficacy for practical deployment on resource-constrained devices.
\end{itemize}
\section{Related Work}
\label{sec:related_work}
\paragraph{Federated Network Pruning}
Federated pruning research grapples with a trade-off between server-side and client-side strategies~\cite{yi2024fedp3}. Server-side methods~\cite{jiang2022fedmp,wang2024feddse} offer communication efficiency by distributing pre-pruned, uniform submodels, but their data-agnostic nature limits their ability to achieve personalization in non-IID settings. Conversely, client-side approaches~\cite{li2021fedmask} achieve personalization by pruning locally based on client data, but at the cost of high computational overhead and potential privacy risks. SubFLOT resolves this dilemma by introducing a new paradigm: server-side personalized pruning. By leveraging Optimal Transport to match parameter distributions, it achieves data-aware personalization on the server, eliminating the burdens of client-side computation.

\paragraph{Optimal Transport}
The geometric foundation of Optimal Transport (OT)~\cite{kantorovich2006translocation} has catalyzed its adoption across machine learning for tasks requiring robust distribution alignment ~\cite{damodaran2018deepjdot,fatras2021unbalanced}. 
Within federated learning, recent efforts like FedOTP~\cite{li2024global} and FedAli~\cite{ek2024fedali} have successfully applied OT to enhance model personalization. However, these methods typically perform feature-space alignment on the client side, which incurs substantial computational and privacy overhead. Our work diverges fundamentally in two key aspects. First, we pioneer the application of OT to the task of federated network pruning, a novel use case. Second, we shift the locus of alignment from the client-side feature space to the server-side parameter space. This key innovation enables SubFLOT to perform sophisticated personalization entirely on the server, establishing a more efficient and privacy-preserving framework.
\section{Methodology}
\label{sec:methodology}

\begin{figure*}[t]
\centering
\includegraphics[width=1\linewidth]{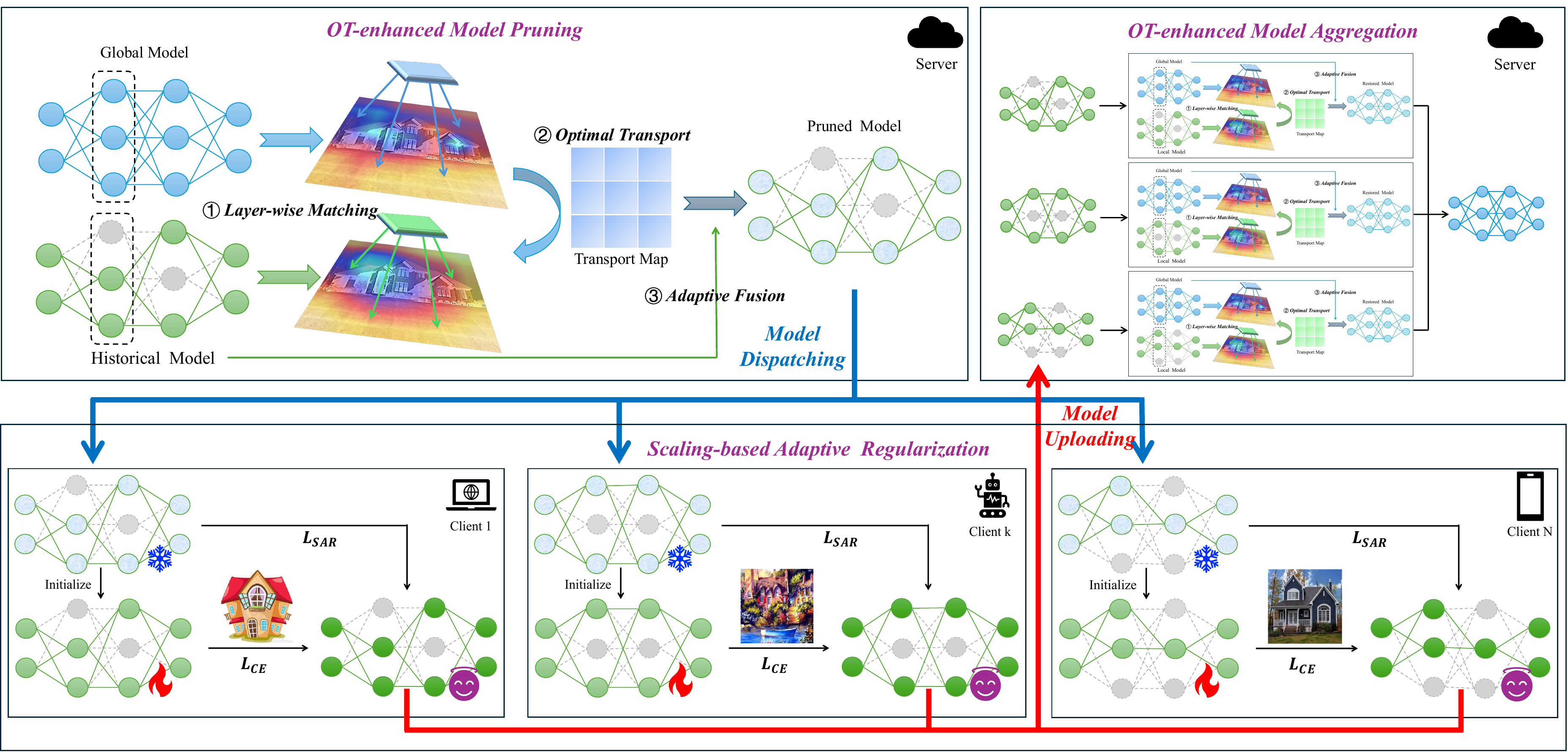} 
\caption{An overview of the proposed SubFLOT framework. On the server, Optimal Transport-enhanced Pruning (OTP) leverages historical models to generate personalized, heterogeneous submodels for clients. Clients then train these submodels locally, guided by Scaling-based Adaptive Regularization (SAR). Finally, the server utilizes the repurposed OT-enhanced Aggregation (OTA) module to align and aggregate the updated submodels, improving global model convergence.} 
\label{fig:pipeline}
\vspace{-2mm}
\end{figure*}

\subsection{Preliminaries}
\label{preliminaries}
We consider a federated network of $N$ clients and each client $i$($i \in [N]$) holds a local dataset $\mathcal{D}_i$. For the personalized task,the local objective for client $i$ is defined as:
\begin{equation}
\label{eq:local_func}
\mathcal{F}_i(W_i) \coloneqq \mathbb{E}_{\xi_i \sim \mathcal{D}_i} [\mathcal{L}_i(W_i; \xi_i)],
\end{equation}
where $W_i$ represents the parameters of the personalized model for client $i$, and $\mathcal{L}_i$ is the client-specific loss function evaluated on a data sample $\xi_i$. The global objective of the pFL system is to find the optimal set of personalized models $\{W_1^*, \dots, W_N^*\}$ that minimizes the weighted average of these local objectives:
\begin{equation}
\label{eq:global_func}
\min_{W_1, \dots, W_N} \mathcal{F}(W_1, \dots, W_N) \coloneqq \sum_{i=1}^N p_i \mathcal{F}_i(W_i),
\end{equation}
where the weights $p_i = |\mathcal{D}_i| / \sum_{j=1}^N |\mathcal{D}_j|$ are proportional to the size of the local datasets and satisfy $\sum_{i=1}^N p_i = 1$. For convergence analysis, we also define the global loss function $\mathcal{F}(W_G) \coloneqq \sum_{i=1}^N p_i \mathcal{F}_i(W_G)$, which measures the performance of a single global model $W_G$ across all clients. The optimal global model is denoted as $W^* = \arg\min_W \mathcal{F}(W_G)$.

\subsection{OT-Enhanced Model Pruning(OTP)}
\label{subsec:OTP}

A central challenge in pFL is the server's inability to generate client-specific models without access to local data. We address this by leveraging the principle that a client's historical model parameters serve as a proxy, implicitly encoding its local data distribution. Consequently, we formulate the server-side personalized pruning task as an optimal transport problem between the parameters of the global model $W_G$ and each client's historical model $W_i$, with the objective of aligning functionally equivalent neurons.

Directly computing the Optimal Transport plan for entire deep neural networks is computationally intractable due to their high dimensionality. To circumvent this, we introduce a \textit{progressive layer-wise matching} strategy. This approach decomposes the global OT problem into a sequence of manageable, layer-specific transport problems, iteratively computing transport plans $\{T_i^{(l)}\}_{l=1}^L$ for each layer $l$. 

For a given client $i$ and layer $l$, let $W_G^{(l, l-1)} \in \mathbb{R}^{d_l \times d_{l-1}}$ and $W_i^{(l, l-1)} \in \mathbb{R}^{d'_l \times d'_{l-1}}$ represent the weight matrices connecting layer $l-1$ to layer $l$. The process unfolds as follows:
First, given the transport plan $T_i^{(l-1)} \in \mathbb{R}^{d_{l-1} \times d'_{l-1}}$ from the previous layer, which aligns the neurons of layer $l-1$, we first remap the input space of the global model's weights to match the client's aligned feature space:
\begin{equation}
\label{eq:align_input}
\widehat{W}_G^{(l,l-1)} = W_G^{(l,l-1)} T_i^{(l-1)}.
\end{equation}
Next, we treat the output neurons of the aligned global weights $\widehat{W}_G^{(l,l-1)}$ and the client's local weights $W_i^{(l,l-1)}$ as two discrete probability distributions, $\mu^{(l)}$ and $\nu^{(l)}$, respectively (assuming uniform weights on neurons for simplicity). We compute a cost matrix $C^{(l)} \in \mathbb{R}^{d_l \times d'_l}$, where each entry $C_{jk}^{(l)}$ is the pairwise Euclidean distance between the $j$-th neuron of $\widehat{W}_G^{(l,l-1)}$ and the $k$-th neuron of $W_i^{(l,l-1)}$. The optimal transport plan $T_i^{(l)}$ for the current layer is then found by solving the discrete OT problem:
\begin{equation}
\label{eq:otp_problem}
T_i^{(l)} = \arg\min_{T \in \Pi(\mu^{(l)}, \nu^{(l)})} \langle C^{(l)}, T \rangle_F,
\end{equation}
where $\Pi(\mu^{(l)}, \nu^{(l)})$ denotes the set of all valid transport plans (joint probability distributions) with marginals $\mu^{(l)}$ and $\nu^{(l)}$, and $\langle \cdot, \cdot \rangle_F$ is the Frobenius dot product.

Upon obtaining the optimal transport plan $T_i^{(l)}$, we first generate an aligned version of the global layer's weights by projecting them into the client's parameter space:
\begin{equation}
\label{eq:otp_align}
W_{\text{aligned}}^{(l,l-1)} = {T_i^{(l)}}^\top \widehat{W}_G^{(l,l-1)}.
\end{equation}
To produce the final personalized submodel $\widetilde{W}_i^{(l,l-1)}$ for the client, we fuse this aligned global knowledge with the client's specialized parameters from its historical model $W_i^{(l, l-1)}$:
\begin{equation}
\label{eq:otp_mix}
\widetilde{W}_i^{(l,l-1)} = \alpha \cdot W_{\text{aligned}}^{(l,l-1)} + (1 - \alpha) \cdot W_i^{(l,l-1)}.
\end{equation}
where $\alpha \in [0,1]$ is a hyperparameter that balances the trade-off between global knowledge transfer and local specialization. This process yields a personalized model $\widetilde{W}_i$ that is pre-adapted to client $i$'s data characteristics before local training commences.

\subsection{Scaling-based Adaptive Regularization(SAR)}
\label{subsec:SAR}
Having received its personalized submodel $\widetilde{W}_i$, each client performs local training. A key challenge here is that aggressive pruning can lead to significant parametric divergence, where smaller models undergo large weight updates, destabilizing training. To address this, we introduce Scaling-based Adaptive Regularization (SAR), a novel local objective function that constrains the training trajectory.

Unlike prior work such as HeteroFL~\cite{Diao2020HeteroFLCA}, which applies post-hoc corrections to weight magnitudes after local training, SAR proactively regularizes the training dynamics. This prevents the model from drifting into parameter regions that are detrimental to aggregation. Our approach is motivated by the insight that submodels with higher pruning rates are more susceptible to this divergence.

The core of SAR is a regularization term that penalizes the deviation of the client's current model $W_i$ from its server-provided anchor model $\widetilde{W}_i$, with the penalty strength adaptively scaled by the client's pruning rate $\rho_i$. The SAR loss for client $i$ is defined as:
\begin{equation}
\label{eq:sar_loss}
\mathcal{L}_{\text{SAR}}(W_i) = \rho_i \cdot \|W_i - \widetilde{W}_i\|_2^2.
\end{equation}
This formulation yields two crucial benefits: \textit{(1) Adaptive Control:} The penalty is directly proportional to the pruning rate $\rho_i$. Consequently, clients with smaller models (higher $\rho_i$), which are most at risk of divergence, receive stronger regularization, tethering their updates to the well-initialized anchor $\widetilde{W}_i$. \textit{(2) Divergence Mitigation:} By anchoring local training to the server-provided submodel, SAR discourages large parametric shifts. This mitigates the risk of feature distribution skew and ensures that the updated models remain in a cooperative region of the parameter space.
The complete local objective for client $i$ on its local dataset $\mathcal{D}_i$ thus integrates the standard cross-entropy loss $\mathcal{L}_{\text{CE}}$ with our SAR term:
\begin{equation}
\label{eq:final_loss}
\mathcal{L}_{i}(W_i) = \mathcal{L}_{\text{CE}}(W_i; \mathcal{D}_i) + \lambda \cdot \mathcal{L}_{\text{SAR}}(W_i),
\end{equation}
where $\lambda$ is a hyperparameter balancing task-specific learning and training stability. This strategy promotes more stable local training and produces higher-quality model updates for aggregation.

\subsection{OT-Enhanced Model Aggregation (OTA)}
\label{subsec:OTA}

The final component of our framework is the OT-enhanced Aggregation (OTA) module, which addresses feature space misalignment during model fusion. Standard federated averaging often falters in heterogeneous settings because it directly averages parameters that may correspond to semantically different features, leading to destructive interference.

The OTA module elegantly repurposes the geometric insights from OTP. For each client $i$, OTA computes a transport map $\mathcal{T}_i$ that aligns the parameters of the client's updated model $W_i^t$ back to the canonical parameter space of the global model $W_G^t$. This map is derived using the same progressive layer-wise OT procedure as in OTP, but in the reverse direction—matching the updated client model $W_i^t$ to the previous global model $W_G^t$.
The global model for the next round, $W_G^{t+1}$, is then updated by aggregating these aligned client models:
\begin{equation}
\label{eq:ota_agg}
W_G^{t + 1} = \sum_{i=1}^N p_i \cdot \mathcal{T}_i(W_i^t).
\end{equation}
By aligning models before averaging, OTA offers two significant advantages. First, it mitigates magnitude discrepancies arising from varied pruning rates, as the OT-based alignment inherently normalizes parameter scales. Second, it leverages the geometric properties of OT to match functionally analogous neurons, thereby suppressing the negative effects of feature shifts and leading to more stable and effective global model updates.

\section{Convergence Analysis}
\label{convergence}
We provide a rigorous convergence analysis for our proposed SubFLOT. The objective is to demonstrate that the sequence of global models, $\{W_G^t\}_{t=0}^T$, generated by the algorithm converges linearly to a neighborhood of the optimal solution $W^*$ that minimizes the global loss function.

\subsection{Assumptions}
Our analysis relies on the following standard assumptions in the federated learning literature~\cite{Hao_2025_CVPR,Zhang2023fedcp,zhang2023eliminating}. Regarding Assumptions \ref{ass:ota_error} and \ref{ass:otp_error}, to ensure consistent dimensions for the norm calculation, parameters absent in the submodel are considered masked to zero.

\begin{assumption}[L-smoothness]
\label{ass:smooth}
The local objective functions $\mathcal{F}_1, \dots, \mathcal{F}_N$ are all $L$-smooth, \ie, for all $W_1$ and $W_2$, we have \(\norm{\nabla \mathcal{F}_i(W_1) - \nabla \mathcal{F}_i(W_2)} \le L \norm{W_1 - W_2}\).
\end{assumption}

\begin{assumption}[$\mu$-strong convexity]
\label{ass:convex}
$\mathcal{F}_1, \dots, \mathcal{F}_N$ are all $\mu$-strongly convex, \ie, for all $W_1$ and $W_2$, we have \(\mathcal{F}_i(W_2) \ge \mathcal{F}_i(W_1) + \inner{\nabla \mathcal{F}_i(W_1)}{W_2 - W_1} + \frac{\mu}{2} \norm{W_2 - W_1}^2\).
\end{assumption}

\begin{assumption}[Bounded Gradient Variance]
\label{ass:variance}
The variance of the stochastic gradient on each client is bounded, \ie, for any client $i$ and model $W$, we have \(\mathbb{E}_{\xi_i \sim \mathcal{D}_i}[\norm{\nabla \mathcal{L}_i(W; \xi_i) - \nabla \mathcal{F}_i(W)}^2] \le \sigma^2.\)
\end{assumption}

\begin{assumption}[Bounded Stochastic Gradient]
\label{ass:heterogeneity}
The expected squared norm of the full local gradients is bounded, \ie, for all $i$ and $W$, we have \(\mathbb{E}[\norm{\nabla \mathcal{F}_i(W)}^2] \le G^2.\)
\end{assumption}

\begin{assumption}[Bounded OT Perturbation]
\label{ass:ota_error}
OTA aligns a local model $W_i^t$ to the global parameter space, which introduces a bounded perturbation: \( \mathbb{E}_i [\norm{\mathcal{T}_i(W_i^t) - W_i^t}^2] = \sum_{i=1}^N p_i \norm{\mathcal{T}_i(W_i^t) - W_i^t}^2 \le \delta_{OT}^2. \)
\end{assumption}

\begin{assumption}[Bounded Personalization Error]
\label{ass:otp_error}
OTP generates a personalized anchor $\widetilde{W}_i^t$ from the global model $W_G^t$. The expected squared deviation of this personalized anchor from the global model is bounded:
\( \mathbb{E}_i [\norm{\widetilde{W}_i^t - W_G^t}^2] = \sum_{i=1}^N p_i \norm{\widetilde{W}_i^t - W_G^t}^2 \le \delta_{P}^2.\)
\end{assumption}

\subsection{Convergence Theorem}

\begin{theorem}
\label{thm:main}
Under Assumptions \ref{ass:smooth}--\ref{ass:otp_error}, let the number of local epochs be $E$, $\rho_{\max} = \max_i \rho_i$ and the local learning rate be $\eta_l \le \min \{\frac{1}{8LE},\;\frac{1}{4\lambda\rho_{\max}}\}$. After $T$ communication rounds, SubFLOT achieves the following guarantee:
\begin{equation}
    \mathbb{E}[\mathcal{F}(W_G^T) - \mathcal{F}(W_G^*)] \le \gamma^T (\mathcal{F}(W_G^0) - \mathcal{F}(W_G^*)) + \frac{\mathcal{E}}{\mu},
\end{equation}
where \(\gamma = 1 - \frac{\mu \eta_l E}{2} \) and the asymptotic error term $\mathcal{E}$ is given by:
\begin{equation}
\begin{split}
  \mathcal{E}
  =&\; \frac{L\eta_l E\,\sigma^2}{2}
     + 4L^2 E^2 \eta_l G^2
     + \frac{16 L E \eta_l }{\lambda\rho_{\max}}(\sigma^2 + G^2) \\
   &\; + \frac{5}{2}\,\delta_{OT}^2
     + 4 L \delta_P^2
     + 4 L E^2 \eta_l^2 (\lambda\rho_{\max})^2 \delta_P^2.
\end{split}
\end{equation}
\end{theorem}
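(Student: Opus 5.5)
The plan is to establish a one-round descent inequality of the form
\[
\mathbb{E}[\mathcal{F}(W_G^{t+1}) - \mathcal{F}(W^*)] \le \gamma\,\mathbb{E}[\mathcal{F}(W_G^t) - \mathcal{F}(W^*)] + \frac{\eta_l E}{2}\,\mathcal{E}',
\]
with $\frac{\eta_l E}{2}\mathcal{E}' = \frac{\mu\eta_l E}{2}\cdot\frac{\mathcal{E}}{\mu}$, i.e.\ $\mathcal{E}'=\mathcal{E}$. Unrolling this recursion over $T$ rounds and summing the geometric series $\sum_{s<T}\gamma^s \le \frac{1}{1-\gamma} = \frac{2}{\mu\eta_l E}$ then gives exactly the stated bound with the residual $\mathcal{E}/\mu$. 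To obtain the descent inequality I will write the round update as a virtual averaged sequence. Let $\bar W^{t,k} = \sum_i p_i W_i^{t,k}$ over local steps $k=0,\dots,E-1$, where each client runs SGD on $\mathcal{L}_i = \mathcal{L}_{CE} + \lambda\rho_i\|W_i-\widetilde W_i^t\|^2$ starting from $W_i^{t,0}=\widetilde W_i^t$. Then
\[
W_G^{t+1} = \bar W^{t,E} + \sum_i p_i\bigl(\mathcal{T}_i(W_i^{t,E}) - W_i^{t,E}\bigr).
\]
I will handle three error sources separately: the OTA perturbation at aggregation, the OTP mismatch at initialization, and the drift together with the SAR gradient during local steps.

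\textbf{Step 1 (OTA and OTP perturbations).} Using $L$-smoothness, I bound
\[
\mathcal{F}(W_G^{t+1}) \le \mathcal{F}(\bar W^{t,E}) + \langle \nabla\mathcal{F}(\bar W^{t,E}), \Delta_{OT}\rangle + \tfrac{L}{2}\|\Delta_{OT}\|^2 .
\]
Young's inequality splits the cross term into a small multiple of $\|\nabla\mathcal{F}\|^2$, which is absorbed by the descent term, plus a multiple of $\|\Delta_{OT}\|^2$. Jensen's inequality together with Assumption~\ref{ass:ota_error} gives $\|\Delta_{OT}\|^2\le\delta_{OT}^2$. I expect the constants to combine to $\frac{5}{2}\delta_{OT}^2$ once rescaled into $\mathcal{E}$. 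Similarly, the starting point $\bar W^{t,0}=\sum_i p_i\widetilde W_i^t$ differs from $W_G^t$, and smoothness plus Assumption~\ref{ass:otp_error} produces the $4L\delta_P^2$ term.

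\textbf{Step 2 (local descent).} For each local step, smoothness gives
\[
\mathbb{E}\mathcal{F}(\bar W^{t,k+1}) \le \mathcal{F}(\bar W^{t,k}) - \eta_l\langle\nabla\mathcal{F}(\bar W^{t,k}), \bar g^k\rangle + \tfrac{L\eta_l^2}{2}\mathbb{E}\|\bar g^k\|^2,
\]
where $\bar g^k = \sum_i p_i\bigl(\nabla\mathcal{L}_i(W_i^{t,k};\xi) + 2\lambda\rho_i(W_i^{t,k}-\widetilde W_i^t)\bigr)$. I decompose the inner product as $\|\nabla\mathcal{F}(\bar W)\|^2$ minus discrepancy terms. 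The drift term $\nabla\mathcal{F}_i(W_i^{t,k})-\nabla\mathcal{F}_i(\bar W^{t,k})$ is bounded by $L^2\|W_i^{t,k}-\bar W^{t,k}\|^2$. A standard drift lemma, using $\eta_l\le 1/(8LE)$ and Assumptions~\ref{ass:variance}--\ref{ass:heterogeneity}, gives drift $\lesssim E^2\eta_l^2 G^2$, which yields $4L^2E^2\eta_l G^2$ after summing. The variance term gives $\frac{L\eta_l E\sigma^2}{2}$. Finally, strong convexity (PL form) gives $\|\nabla\mathcal{F}\|^2\ge 2\mu(\mathcal{F}-\mathcal{F}^*)$, and after $E$ steps this produces the contraction $1-\frac{\mu\eta_l E}{2}$.

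\textbf{Step 3 (SAR term; main obstacle).} The regularizer gradient $2\lambda\rho_i(W_i^{t,k}-\widetilde W_i^t)$ is a bias relative to $\nabla\mathcal{F}$. I will bound $\|W_i^{t,k}-\widetilde W_i^t\|$ by unrolling the local recursion. The condition $\eta_l\le 1/(4\lambda\rho_{\max})$ makes the map $W\mapsto W-2\eta_l\lambda\rho_i(W-\widetilde W)$ a contraction, so the displacement stays bounded by the accumulated stochastic gradients. Two bounds then combine. Young's inequality with weight $\lambda\rho_{\max}$ yields the $\frac{16LE\eta_l}{\lambda\rho_{\max}}(\sigma^2+G^2)$ term. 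The residual coupling between the anchor and $W_G^t$ contributes $(\lambda\rho_{\max})^2E^2\eta_l^2\delta_P^2$, scaled by $4L$.

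Getting these constants exactly, in particular the inverse $\lambda\rho_{\max}$ factor, requires choosing the Young parameter carefully. I would first try the split $2ab\le \lambda\rho_{\max}a^2 + b^2/(\lambda\rho_{\max})$. I would then collect all terms, check that the $\|\nabla\mathcal{F}\|^2$ coefficients remain at least $\eta_l E/4$ under the two step-size conditions, and finally unroll the recursion as described at the start.
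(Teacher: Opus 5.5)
Your opening bookkeeping is correct, and it is exactly where the plan breaks. To end with $\mathcal{E}/\mu$ after dividing by $1-\gamma=\mu\eta_l E/2$, every per-round error must be $\frac{\eta_l E}{2}$ times the matching term of $\mathcal{E}$. So Step~1 may contribute only $\frac54\eta_l E\,\delta_{OT}^2$ and $2L\eta_l E\,\delta_P^2$ per round. It cannot achieve this. The OTA shift $\Delta_{OT}$ and the offset $\sum_i p_i\widetilde W_i^t-W_G^t$ are jumps of size up to $\delta_{OT}$ and $\delta_P$. They occur once per round and do not shrink with $\eta_l$.

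Your own expansion already leaves $\frac{L}{2}\|\Delta_{OT}\|^2$ (resp.\ $\frac{L}{2}\delta_P^2$) with no $\eta_l$ factor. More decisively, a round provides only $O(\eta_l E)\|\nabla\mathcal{F}\|^2$ of descent. The Young weight on $\|\nabla\mathcal{F}\|^2$ in the cross term must therefore be $O(\eta_l E)$, which forces a companion term of order $\delta_{OT}^2/(\eta_l E)$ (resp.\ $\delta_P^2/(\eta_l E)$). No choice of Young parameters changes this power of $\eta_l$. After unrolling, the residual is of order $\delta_{OT}^2/(\mu\eta_l^2E^2)$, not $\frac52\delta_{OT}^2/\mu$.

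This is not an artifact of technique. Take $\sigma=0$, $\delta_P=0$, $\mathcal{F}_i=\frac{\mu}{2}\|W\|^2$ and a fixed OTA shift $d$ with $\|d\|=\delta_{OT}$. One round maps $W_G^t\mapsto c_E W_G^t+d$ with $1-c_E\le\mu\eta_l E$. The iterates therefore converge to a point with $\mathcal{F}-\mathcal{F}^*\ge\delta_{OT}^2/(2\mu\eta_l^2E^2)$, which grows like $\eta_l^{-2}$ as $\eta_l\to0$. By contrast, $\mathcal{E}/\mu$ grows at most like $\eta_l^{-1}$ even when $G$ is read as a bound on gradients along the iterates. (Read literally for all $W$, Assumption~\ref{ass:heterogeneity} contradicts Assumption~\ref{ass:convex}.)

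There is also a secondary gap in Step~2. Your ``standard drift lemma'' assumes a common starting point, but clients start from different anchors $\widetilde W_i^t$. So $\|W_i^{t,k}-\bar W^{t,k}\|^2$ carries an $O(\delta_P^2)$ dispersion that does not scale like $E^2\eta_l^2G^2$.

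The paper's proof does not supply the missing step. It uses a different decomposition: one smoothness step at $W_G^t$ for the whole round, with the local bias controlled via Lemma~\ref{lem:local_drift}. It too carries $\frac12\delta_{OT}^2+L\delta_{OT}^2$ and $2L\delta_P^2$ per round with no $\eta_l E$ factor. Such $O(1)$ errors fit an $O(\delta^2/\mu)$ residual only under an $\eta_l$-independent contraction, and the paper's $1-\mu/2$ comes from a sign error. It ``absorbs'' the $+\frac12\mathbb{E}\|\nabla\mathcal{F}(W_G^t)\|^2$ from its Young step into $-\frac{\eta_l E}{2}\mathbb{E}\|\nabla\mathcal{F}(W_G^t)\|^2$ using $\eta_l\le 1/(8E)$, yet the net coefficient $\frac{1-\eta_l E}{2}$ is positive.

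The paper has three further problems:
\begin{itemize}
\item When passing from bounds on $W_i^t-\widetilde W_i^t$ to $\bar W^t-W_G^t$, it drops the offset $\sum_i p_i(\widetilde W_i^t-W_G^t)$, which is $O(\delta_P)$, not $O(\eta_l E)$.
\item It defines $\mathcal{E}:=\frac{2}{\eta_l E}\mathcal{E}_{\text{round}}$, but the theorem displays $\mathcal{E}_{\text{round}}$ itself, dropping exactly the factor your bookkeeping keeps.
\item Its $1/(\lambda\rho_{\max})$ term rests on the loosening $\eta_l E\le 4\eta_l E/(\lambda\rho_i)$, valid only for $\lambda\rho_i\le 4$, and on ``$\rho_i\ge\rho_{\max}/2$ without loss of generality.'' Do not expect your Young split to reproduce that term.
\end{itemize}

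With an honest $O(\eta_l E)$ contraction, either route yields a residual of order $\frac{1}{\mu\eta_l E}\bigl(L+\frac{1}{\eta_l E}\bigr)(\delta_{OT}^2+\delta_P^2)$ plus the noise terms. The statement must be weakened accordingly, or the OTA/OTP perturbations must be assumed to shrink with $\eta_l E$.
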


\begin{remark}
\Cref{thm:main} establishes that \texttt{SubFLOT} converges at a rate of $1 - \mu \eta_l E / 2$ to a neighborhood of the global optimum $W_G^*$. The neighborhood size is determined by the stochastic gradient noise ($\sigma^2$), the statistical heterogeneity ($G^2$), the personalization deviation ($\delta_P^2$), the OT alignment perturbation ($\delta_{OT}^2$), and the strength of the SAR regularization ($\lambda\rho_{\max}$). The detailed proof of \Cref{thm:main} is provided in the Appendix \cref{suppl:conver}.
\end{remark}
\section{Experiments}
\label{sec:experiments}

\subsection{Experimental Setup}
\paragraph{Datasets and Partition}
Our evaluation spans multiple domains to ensure broad applicability.
For CV tasks, we use five public datasets: CIFAR10 and CIFAR100~\cite{krizhevsky2009learning}, TinyImageNet~\cite{chrabaszcz2017downsampled}, and two multi-domain datasets, Digit5~\cite{lecun1998gradient} and PACS~\cite{yu2022pacs}. For NLP tasks, we use the AG News~\cite{zhang2015character}. For IoT tasks, we utilize a Human Activity Recognition (HAR) dataset~\cite{anguita2012human} based on sensor signals. To simulate diverse statistical heterogeneity, we construct scenarios for label skew, feature shift, and real-world partitions. The details are provided in the Appendix \cref{suppl:exp}.

\paragraph{Backbones}
In line with established benchmarks and to demonstrate model-agnosticism, we pair each dataset with a special neural network architecture as follows: VGG11~\cite{simonyan2014very} for CIFAR10, MobileNetV2~\cite{sandler2018mobilenetv2} for CIFAR-100, Digit5CNN~\cite{pmlr-v139-feng21f} for Digit5, ResNet18~\cite{he2016deep} for both TinyImageNet and PACS, fastText~\cite{joulinetal2017bag} for AG News, and a specialized HARCNN~\cite{zeng2014convolutional} for HAR. 
\begin{table*}[ht]
\centering
\vspace{-1mm}
\caption{Experimental results(\%) under label skew settings and real-world settings.}
\vspace{-1mm}
\label{table:result_label_shift}
\resizebox{\linewidth}{!}{
\begin{tabular}{l|ccc|ccc|c|c}
\toprule
Settings & \multicolumn{3}{c|}{Pathological Label Skew} & \multicolumn{4}{c|}{Practical Label Skew(Dir)} & Real-world\\
\midrule
Tasks & \multicolumn{3}{c|}{CV} & \multicolumn{3}{c|}{CV} & \multicolumn{1}{c|}{NLP} & IoT \\
\midrule
Datasets   & CIFAR10 & CIFAR100 & TinyImageNet & CIFAR10 & CIFAR100 & TinyImageNet & AG News & HAR \\ 
\midrule
HeteroFL & 84.54 $\pm$ 0.10 & 40.95 $\pm$ 0.31 & 19.68 $\pm$ 0.10 & 81.21 $\pm$ 0.20 & 29.36 $\pm$ 0.32 & 19.91 $\pm$ 0.14 & 84.12 $\pm$ 0.11 & 69.80 $\pm$ 0.27 \\
FedRolex & 84.33 $\pm$ 0.27 & 43.36 $\pm$ 0.43 & 19.56 $\pm$ 0.12 & 81.04 $\pm$ 0.21 & 30.59 $\pm$ 0.39 & 20.91 $\pm$ 0.21 & 84.29 $\pm$ 0.34 & 68.96 $\pm$ 0.13 \\
FedDrop & 80.63 $\pm$ 0.22 & 28.20 $\pm$ 0.51 & 11.06 $\pm$ 1.22  & 78.49 $\pm$ 0.32 & 21.40 $\pm$ 0.73 & 12.93 $\pm$ 0.42 & 73.34 $\pm$ 0.58 & 57.48 $\pm$ 0.54 \\
FedMP & 84.13 $\pm$ 0.16 & 47.09 $\pm$ 0.33 & 17.63 $\pm$ 0.23 & 82.18 $\pm$ 0.23 & 33.45 $\pm$ 0.44 & 20.61 $\pm$ 0.33 & 83.97 $\pm$ 0.30 & 75.55 $\pm$ 0.23\\
ScaleFL & 83.57 $\pm$ 0.23 & 43.98 $\pm$ 0.26 & 20.10 $\pm$ 0.45 & 81.13 $\pm$ 0.18 & 30.85 $\pm$ 0.29 & 21.23 $\pm$ 0.35 & 83.88 $\pm$ 0.22 & 70.11 $\pm$ 0.41 \\
Flado & 83.78 $\pm$ 0.15 & 42.25 $\pm$ 0.30 & 18.82 $\pm$ 0.19 & 81.01 $\pm$ 0.19 & 29.32 $\pm$ 0.35 & 19.76 $\pm$ 0.19 & 82.54 $\pm$ 0.28 & 71.17 $\pm$ 0.32 \\
DepthFL & 84.20 $\pm$ 0.24 & 47.28 $\pm$ 0.44 & 20.02 $\pm$ 0.34 & 81.33 $\pm$ 0.26 & 31.13 $\pm$ 0.37 & 20.17 $\pm$ 0.31 & 85.10 $\pm$ 0.28 & 73.43 $\pm$ 0.31 \\
FedDSE & 85.02 $\pm$ 0.13 & 46.23 $\pm$ 0.49 & 19.04 $\pm$ 0.36 & 81.98 $\pm$ 0.27 & 31.77 $\pm$ 0.51 & 20.25 $\pm$ 0.28 & 84.01 $\pm$ 0.31 & 70.88 $\pm$ 0.23 \\
AdaptiveFL & 83.87 $\pm$ 0.17 & 43.22 $\pm$ 0.46 & 19.55 $\pm$ 0.41 & 80.79 $\pm$ 0.23 & 30.16 $\pm$ 0.22 & 19.76 $\pm$ 0.33 & 84.15 $\pm$ 0.27 & 69.83 $\pm$ 0.17 \\
FlexFL & 85.13 $\pm$ 0.11 & 49.21 $\pm$ 0.45 & 22.23 $\pm$ 0.32 & 83.10 $\pm$ 0.25 & 35.27 $\pm$ 0.33 & 22.19 $\pm$ 0.44 & 86.02 $\pm$ 0.19 & 76.24 $\pm$ 0.56 \\
\midrule
\textbf{SubFLOT} & \textbf{86.89 $\pm$ 0.14} & \textbf{58.37 $\pm$ 0.24} & \textbf{29.30 $\pm$ 0.13} & \textbf{83.78 $\pm$ 0.16} & \textbf{44.88 $\pm$ 0.24} & \textbf{25.15 $\pm$ 0.24} & \textbf{87.88 $\pm$ 0.16} & \textbf{79.72 $\pm$ 0.21}\\
\bottomrule
\end{tabular}
}
\end{table*}

\begin{table*}[ht]
\centering
\vspace{-1mm}
\caption{Experimental results(\%) under feature shift Settings.}
\vspace{-1mm}
\label{table:result_feature_shift}
\resizebox{0.9\linewidth}{!}{
\begin{tabular}{l|ccccc|c|cccc|c}
\toprule
Datasets & \multicolumn{6}{c|}{Digit5} & \multicolumn{5}{c}{PACS} \\
\midrule
Domains   & Mnistm & Mnist & Syn & Usps & Svhn  & Avg. & Photo & Art Painting & Cartoon &  Sketch & Avg.    \\ 
\midrule
HeteroFL & 67.44 & 94.68 & 71.73 & 94.49 & 59.60 & 77.59 & 16.23 & 13.66 & 20.27 & 26.90 & 19.27\\
FedRolex & 66.07 & 94.30 & 70.60 & 94.20 & 52.70 & 75.57 & 9.30 & 16.76 & 15.78 & 28.30 & 17.53\\ 
FedDrop & 50.14 & 84.14 & 53.48 & 83.75 & 40.45 & 62.39 & 17.43 & 11.86 & 14.26 & 25.76 & 17.33\\
FedMP & 64.69 & 93.86 & 67.97 & 93.32 & 54.91 & 74.95 & 9.30 & 10.72 & 14.71 & 22.59 & 14.33\\
ScaleFL & 65.23  & 92.72  & 68.45  & 93.01  & 55.89 & 75.06 & 15.67  & 14.23  & 20.34  & 26.83 & 19.27 \\
Flado & 68.56  & 94.89  & 72.34  & 94.78  & 60.45 & 78.20 & 16.98  & 13.12  & 19.87  & 26.45 & 19.11 \\
DepthFL & 69.87  & 95.01  & 73.12  & 94.90  & 61.23 & 78.83 & 16.01  & 14.56  & 20.98  & 27.23 & 19.70 \\
FedDSE & 67.90  & 94.76  & 72.89  & 94.67  & 59.34 & 77.91 & 15.45  & 13.98  & 20.56  & 26.78 & 19.19 \\
AdaptiveFL & 68.34  & 94.90  & 73.23  & 94.87 & 60.12 & 78.29 & 15.78  & 14.12  & 21.01  & 27.34  & 19.56 \\
FlexFL & 69.45  & 95.23  & 74.12  & 95.10  & 62.34 & 79.25 & 16.34  & 14.78  & 21.56  & 28.01  & 20.17 \\
\midrule
\textbf{SubFLOT} & \textbf{88.81} & \textbf{98.65} & \textbf{92.27} & \textbf{98.20} & \textbf{84.98} & \textbf{92.58} &\textbf{ 48.23} & \textbf{28.73} & \textbf{42.55}& \textbf{46.83} & \textbf{41.58}\\
\bottomrule
\end{tabular}
}
\vspace{-3mm}
\end{table*}
\paragraph{Baselines and Metrics}
We benchmark SubFLOT against nine state-of-the-art federated pruning methods: HeteroFL~\cite{Diao2020HeteroFLCA}, FedRolex~\cite{alam2022fedrolex}, FedDrop~\cite{caldas2019feddrop}, FedMP~\cite{jiang2022fedmp}, ScaleFL~\cite{ilhan2023scalefl}, Flado~\cite{liao2023adaptive}, DepthFL~\cite{kim2022depthfl}, FedDSE~\cite{wang2024feddse}, AdaptiveFL~\cite{jia2024adaptivefl} and FlexFL~\cite{chen2024flexfl}. 
The primary evaluation metric is the mean accuracy of the local models, averaged over three independent runs with different random seeds.

\paragraph{Implementation Details}
All experiments are implemented in PyTorch and conducted on a server equipped with 4 NVIDIA RTX 4090 GPUs. Unless stated otherwise, we simulate a system with $N=20$ clients, all of which participate in each round (join ratio $p=1.0$). The training proceeds for 200 communication rounds. In each round, clients perform $E=5$ local epochs using Stochastic Gradient Descent (SGD) with a learning rate of $lr=0.001$ and a batch size of $B=256$. We define a set of four available pruning rates, \{0, 1/4, 1/2, 3/4\}, from which each client randomly samples to determine its local model capacity. For our SubFLOT, the fusion hyperparameter $\alpha$ (Eq.~\ref{eq:otp_mix}) is set to 0.5, and the regularization weight $\lambda$ (Eq.~\ref{eq:final_loss}) is set to 1.0.

\subsection{Performance Evaluation}

\paragraph{Label Skew Setting}
We first measured the performance of SubFLOT against baselines on datasets with label shifts and the experimental results are summarized in Table \ref{table:result_label_shift}. The data clearly shows that SubFLOT consistently outperforms state-of-the-art algorithms by a substantial margin across all datasets, thereby affirming its superior capability in handling heterogeneous data distributions. 
Notably, the performance advantage of SubFLOT becomes increasingly pronounced as the task complexity escalates. 
Furthermore, the exceptional performance of SubFLOT is not confined to computer vision; its effectiveness on the AG News dataset demonstrates its strong generalization capability across different data modalities.

\paragraph{Feature Shift Setting}
To evaluate the performance of SubFLOT in the presence of feature shifts, we comprehensively measured the accuracy of SubFLOT on each domain as well as the average accuracy across the entire dataset. 
As shown in Table \ref{table:result_feature_shift}, SubFLOT achieves significant improvements in accuracy on each domain. This not only demonstrates that the OTP module can effectively enable the personalization of local models but also highlights the joint contribution of the SAR and OTA modules in mitigating domain shifts, thereby enabling the model to learn more stable representations.

\paragraph{Real-world Setting}
The experiments on the HAR dataset, which simulates a practical real-world environment, offer crucial insights. As detailed in Table \ref{table:result_label_shift}, there is a distinct performance advantage for methods that utilize dynamic adaptation strategies like FedMP(magnitude) and FlexFL(APoZ), significantly surpassing statically configured models such as HeteroFL. This finding underscores the necessity of dynamically tailoring model architectures to achieve personalization in real-world federated scenarios.
SubFLOT is architected on this very principle. By pioneering the use of Optimal Transport for both personalized submodel generation (OTP) and aggregation (OTA), our framework excels in this realistic setting.

\subsection{Feature Visualization}
\begin{figure*}[ht]
\centering
\includegraphics[width=1\linewidth]{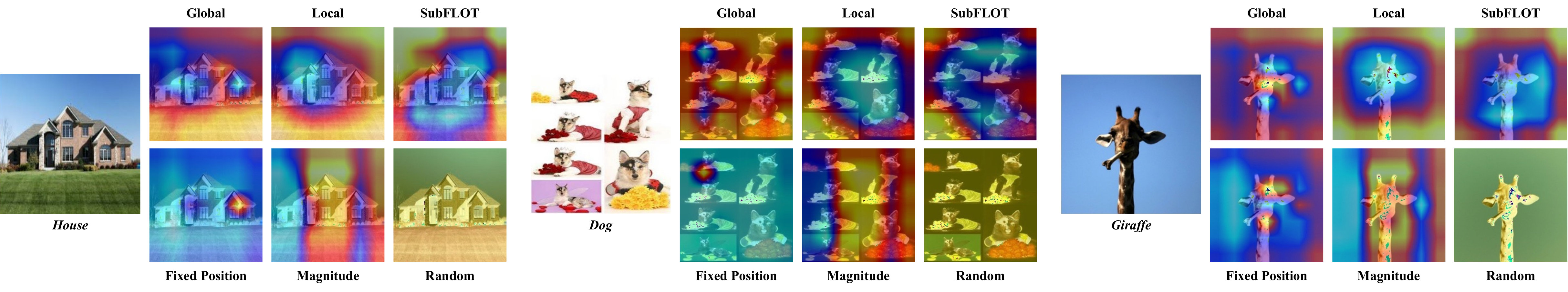} 
\caption{Feature Visualization of our SubFLOT.}
\label{fig:visualization}
\vspace{-1mm}
\end{figure*}
To intuitively understand how our method preserves client-specific knowledge, we use Grad-CAM~\cite{selvaraju2017grad} to visualize feature attention maps on the PACS dataset (Figure~\ref{fig:visualization}). We compare the submodel generated by SubFLOT against those from three baseline pruning strategies (fixed-position, magnitude-based, and random), using the client's historical model as a reference for its local feature priorities.

The visualization reveals a striking correspondence between the attention map of the SubFLOT-generated submodel and the client's historical model. Both highlight the same semantically critical regions, empirically validating that our OTP module successfully preserves and adapts task-relevant features. In contrast, the baseline methods fall short: fixed-position pruning fails to adapt from the global model's focus; magnitude-based pruning provides only a coarse approximation of local features; and random pruning results in diffuse, incoherent attention. This qualitative analysis demonstrates SubFLOT's unique ability to generate submodels that are not only compact but also semantically aligned with each client's data distribution.

\subsection{Scalability and Generalization}
\paragraph{Impact of Number of Clients}
We test SubFLOT's scalability on CIFAR100 ($\beta=0.1$) by varying the number of clients (10, 30, 50) under full participation, and with client pools of 50 and 100 under 10\% partial participation. 
The comprehensive results, presented in Table \ref{table:num_of_clients}, unequivocally demonstrate the superior scalability of SubFLOT. In every experimental setup, SubFLOT not only outperforms all baseline methods but also exhibits remarkable performance consistency. Notably, we observe that as the number of clients grows, the challenge of learning effective personalized models intensifies, causing a pronounced drop in the accuracy of most baselines. SubFLOT, however, proves resilient to this effect, sustaining a high level of performance with minimal degradation. This resilience validates the scalability of our approach, affirming its capability to maintain accuracy and efficiency in large-scale federated systems with a large and diverse client base.
\begin{table}[ht]
\centering
\vspace{-1mm}
\caption{Impact of Number of Clients.}
\vspace{-1mm}
\resizebox{\linewidth}{!}{
\begin{tabular}{l|ccc|cc}
\toprule
   & \multicolumn{3}{c|}{$p = 1$} & \multicolumn{2}{c}{$p = 0.1$}\\
\midrule
Clients & $N = 10$ & $N = 30$ & $N = 50$ & $N = 50$ & $N = 100$ \\ 
\midrule
HeteroFL & 24.73 & 18.91 & 18.79 & 16.28 & 14.60 \\
FedRolex & 29.16 & 19.40 & 19.45 & 17.05 & 15.21 \\ 
FedDrop & 12.09 & 9.05 & 10.36 & 9.48 & 8.97 \\
FedMP & 26.72 & 18.33 & 17.92 & 16.32 & 14.33 \\
ScaleFL & 24.84 & 18.91 & 18.82 & 16.28 & 15.02\\
Flado & 24.56 & 18.49 & 18.23 & 15.76 & 14.27 \\
DepthFL & 26.11 & 19.52 & 19.16 & 17.37 & 15.36\\
FedDSE & 25.23 & 19.13 & 18.91 & 16.84 & 14.79 \\
AdaptiveFL & 24.28 & 18.47 & 18.52 & 16.32 & 14.64\\
FlexFL & 28.97 & 20.12 & 20.24 & 18.87 & 16.26 \\
\midrule
\textbf{SubFLOT} & \textbf{39.20} & \textbf{38.65} & \textbf{38.50} & \textbf{35.72} & \textbf{32.61} \\
\bottomrule
\end{tabular}
}
\vspace{-3mm}
\label{table:num_of_clients}
\end{table}
\paragraph{Impact of Heterogeneity Degree}
To assess robustness against varying non-IID levels, we adjust the number of classes per client (pathological skew) and the Dirichlet parameter $\beta$ (practical skew) on CIFAR-10.
The results are illustrated in Table \ref{table:hetero_degree}.
In the pathological setting, as the number of classes per client increases, the amount of training data available for each class decreases due to the fixed total size of the CIFAR10 dataset. Similarly, in the practical setting, a larger $\beta$ value in the Dirichlet distribution leads to a more uniform distribution of data across clients, which also reduces the quantity of samples per class for individual clients. Consequently, as the degree of heterogeneity decreases, the performance of all algorithms exhibits a slight decline.
Nevertheless, despite these challenges, our proposed SubFLOT consistently outperforms other algorithms across all scenarios. 
\begin{table}[ht]
\centering
\vspace{-1mm}
\caption{Impact of Heterogeneity Degree.}
\vspace{-1mm}
\resizebox{\linewidth}{!}{
\begin{tabular}{l|ccc|ccc}
\toprule
 & \multicolumn{3}{c|}{Pathological Settings} & \multicolumn{3}{c}{Practical Settings} \\ 
\midrule
Params. & $n = 4$ & $n = 6$ & $n = 8$ & $\beta = 0.3$ & $\beta = 0.5$ & $\beta = 1$ \\
\midrule
HeteroFL & 68.85 & 59.80 & 54.63 & 75.60 & 65.45 & 60.36\\
FedRolex & 66.25 & 56.68 & 49.97 & 73.88 & 63.55 & 56.91\\ 
FedDrop & 61.30 & 50.11 & 43.28 & 70.06 & 59.92 & 52.40\\
FedMP & 69.47 & 61.06 & 54.41 & 76.14 & 66.92 & 61.57 \\
ScaleFL & 63.21 & 53.45 & 47.89 & 71.02 & 60.78 & 54.32 \\
Flado & 64.89 & 54.98 & 49.12 & 73.21 & 63.01 & 56.78 \\
DepthFL & 65.45 & 55.67 & 48.90 & 73.34 & 63.10 & 58.67 \\
FedDSE & 67.12 & 57.34 & 53.23 & 75.56 & 64.23 & 59.72 \\
AdaptiveFL & 66.78 & 56.90 & 50.45 & 74.89 & 63.56 & 58.45 \\
FlexFL & 68.56 & 61.78 & 54.90 & 76.45 & 67.23 & 62.87 \\
\midrule
\textbf{SubFLOT} & \textbf{73.58} & \textbf{66.02} & \textbf{62.08} & \textbf{78.92} & \textbf{70.71} & \textbf{65.04}\\
\bottomrule
\end{tabular}
}
\vspace{-5mm}
\label{table:hetero_degree}
\end{table}
\paragraph{Impact of Model Sparsity}
We evaluate SubFLOT's robustness to model sparsity on CIFAR10($\beta=0.1$) across three static regimes with increasing average pruning rates (Low: \{0, 0.125, 0.25, 0.375\}, Medium: \{0, 0.25, 0.5, 0.75\}, High: \{0.25, 0.45, 0.65, 0.85\}) and a Dynamic setting where clients periodically re-sampled pruning rates from the Medium-level set during training. 
As presented in Table \ref{table:sparsity_degree}, an increased sparsity level predictably leads to a performance decline across all algorithms. However, SubFLOT demonstrates exceptional robustness, consistently outperforming all baselines under each sparsity regime, including the dynamic one. This unequivocally underscores the robustness of SubFLOT across a wide spectrum of model sparsities.
\begin{table}[ht]
\centering
\vspace{-1mm}
\caption{Impact of Model Sparsity.}
\vspace{-1mm}
\resizebox{0.87\linewidth}{!}{
\begin{tabular}{l|ccc|c}
\toprule
 & Low & Medium & High & Dynamic\\ 
\midrule
HeteroFL & 82.02 & 81.21 & 81.39 & 81.16\\
FedRolex & 82.21 & 81.04 & 80.02 & 81.22\\ 
FedDrop & 81.61 & 78.49 & 76.17 & 77.91\\
FedMP & 83.00 & 82.18 & 82.37 & 82.30 \\
ScaleFL & 82.17 & 81.13 & 81.13 & 81.06\\
Flado & 81.89 & 81.01 & 80.78 & 80.88 \\
DepthFL & 82.24 & 81.33 & 81.41 & 81.21\\
FedDSE & 83.16 & 81.98 & 81.54 & 81.72\\
AdaptiveFL & 81.87 & 80.79 & 80.62 & 81.46\\
FlexFL & 83.65 & 82.37 & 81.76 & 82.29\\
\midrule
\textbf{SubFLOT} & \textbf{84.27} & \textbf{83.78} & \textbf{82.38} & \textbf{83.54} \\
\bottomrule
\end{tabular}
}
\vspace{-3mm}
\label{table:sparsity_degree}
\end{table}

\subsection{Resource Efficiency}
SubFLOT is designed for high resource efficiency, addressing the prohibitive overheads of standard federated learning. On the server, the computational cost is primarily from the OTP module, which maintains a low theoretical complexity of $\mathcal{O}(L \cdot M^2)$, where $L$ is the number of layers and $M$ is the number of filters. This computational tractability is a direct result of our progressive layer-wise strategy and the use of highly optimized OT solvers~\cite{singh2020model}. Empirically, this translates to minimal latency; for instance, the per-client OTP operation for VGG11, MobileNetV2, and ResNet18 on an NVIDIA 4090 GPU requires just 0.12s, 1.15s, and 0.20s, respectively. Besides, by dispatching customized submodels, SubFLOT drastically alleviates the resource burden on edge devices. As presented in Table~\ref{tab:cost_comparison}, our method achieves a consistent reduction of over 50\% in both per-client, per-round communication and computation costs across all tested architectures when compared to FedAvg, underscoring its profound advantage in resource-constrained settings.
\begin{table}[ht]
\centering
\vspace{-1mm}
\caption{Resource Efficiency.}
\vspace{-1mm}
\label{tab:cost_comparison}
\resizebox{\linewidth}{!}{
\begin{tabular}{llccc}
\toprule
Metric & Method & VGG11 & MobileNet & ResNet18 \\
\midrule
\multirow{2}{*}{Comm.(MB)} & SubFLOT& 4.33 & 1.07 & 5.24 \\
                             & FedAvg & 9.23 & 2.24 & 11.18 \\
\midrule
\multirow{2}{*}{\makecell{Comp.\\(MFLOPs)}} & SubFLOT & 72.35 & 3.22 & 17.83 \\
                               & FedAvg & 153.50 & 6.52 & 37.18 \\
\bottomrule
\end{tabular}
}
\vspace{-3mm}
\end{table}

\subsection{Ablation Study}

To systematically evaluate the contributions of each component in SubFLOT, we conduct rigorous ablation experiments under practical federated learning scenarios using CIFAR10 ($\beta = 0.1$) and Digit5 datasets. The study involves three variants of our framework: 1) w/o OTP, where the OTP module is replaced with fixed-position pruning strategy; 2) w/o SAR, which removes the SAR term during local training, and 3) w/o OTA, where the OTA module is substituted with position-based aggregation using the initial submodel structures.

The experimental results, summarized in Table \ref{table:ablation}, reveal that the absence of any core component leads to significant performance degradation across all testbeds, confirming their synergistic roles in addressing parameter-space and feature-space heterogeneity. Notably, the removal of OTA causes the most pronounced accuracy drop. This underscores the indispensable function of OTA as a post-hoc alignment mechanism, which is essential for effectively aggregating the structurally diverse submodels produced by the OTP module.
\begin{table}[ht]
\centering
\vspace{-1mm}
\caption{Ablation Study.}
\vspace{-1mm}
\resizebox{0.9\linewidth}{!}{
\begin{tabular}{l|cccc}
\toprule
 & \textbf{SubFLOT} & w/o OTP & w/o SAR & w/o OTA  \\
\midrule
Cifar10 & \textbf{83.78} & 83.21 & 82.77 & 81.41  \\
Digit5 & \textbf{92.58} & 90.37 & 89.73 & 78.74  \\
\bottomrule
\end{tabular}
}
\label{table:ablation}
\vspace{-3mm}
\end{table}
\subsection{Hyperparameter Study}
To evaluate the impact of $\alpha$ in OTP/OTA module and $\lambda$ in the SLR module, we conducted extensive experiments on CIFAR10 and the results are presented in Table \ref{table:hyperparamter}.
First, we analyze the impact of $\alpha$, which balances the influence of the current global model against the client's local knowledge. The results indicate that increasing $\alpha$ generally leads to improved final accuracy. This is attributable to the fact that a larger $\alpha$ prioritizes the integration of the most up-to-date global information, thereby accelerating the model's convergence. However, we empirically observed that smaller $\alpha$ values yield more stable training dynamics. 
This reveals a trade-off between convergence speed and training stability in the selection of $\alpha$.

Next, we examine the effect of the regularization coefficient $\lambda$. Initially, performance improves as $\lambda$ increases, because a moderate regularization penalty effectively constrains local sub-models from deviating excessively from the global objective. This ensures that knowledge learned by each client remains consistent and aggregatable. Conversely, when $\lambda$ becomes overly large, performance begins to degrade. This decline is due to excessive regularization, where the stringent penalty stifles the model's flexibility and capacity to adapt to client-specific data.
\begin{table}[ht]
\centering
\vspace{-1mm}
\caption{Hyperparameter Study.}
\vspace{-1mm}
\label{table:hyperparamter}
\resizebox{\linewidth}{!}{
\begin{tabular}{l|cccc|cccc}
\toprule
 & \multicolumn{4}{c|}{Different $\alpha$} & \multicolumn{4}{c}{Different $\lambda$} \\
\midrule
HP. & 0.1 & 0.3 & 0.5 & 0.7 & 0.5 & 1 & 5 & 10 \\
\midrule
Acc. & 82.93 & 82.90 & 83.78 & \textbf{84.39} & 83.22 & 83.78 & \textbf{83.91} & 82.93 \\
\bottomrule
\end{tabular}
}
\vspace{-5mm}
\end{table}
\section{Conclusion}
\label{sec:conclusion}
This paper presents SubFLOT, a novel federated learning framework that fundamentally addresses the dual challenges of system and statistical heterogeneity through synergistic integration of optimal transport and adaptive regularization mechanisms. Through extensive experiments across varied scenarios and system configurations, we demonstrate that SubFLOT achieves superior performance compared to existing methods.
\section{Acknowledgement}
\label{sec:acknowledgement}
This work is supported by the National Key Research and Development Program of China No.2022ZD0115903. The work is also supported by Key Laboratory of Data Intelligence, Beijing.
{
    \small
    \bibliographystyle{ieeenat_fullname}
    \bibliography{main}

@String(CVPR= {IEEE Conf. Comput. Vis. Pattern Recog.})

@String(ECCV= {Eur. Conf. Comput. Vis.})

@String(ICASSP=	{ICASSP})

@String(ICIP = {IEEE Int. Conf. Image Process.})

@String(CVPR  = {CVPR})

@String(ECCV  = {ECCV})

@String(ICIP  = {ICIP})

@inproceedings{mcmahan2017communication,
  title={Communication-efficient learning of deep networks from decentralized data},
  author={McMahan, Brendan and Moore, Eider and Ramage, Daniel and Hampson, Seth and y Arcas, Blaise Aguera},
  booktitle={Artificial intelligence and statistics},
  pages={1273--1282},
  year={2017},
  organization={PMLR}
}

@inproceedings{yang2023efficient,
  title={Efficient model personalization in federated learning via client-specific prompt generation},
  author={Yang, Fu-En and Wang, Chien-Yi and Wang, Yu-Chiang Frank},
  booktitle={Proceedings of the IEEE/CVF International Conference on Computer Vision},
  pages={19159--19168},
  year={2023}
}

@article{frankle2018lottery,
  title={The lottery ticket hypothesis: Finding sparse, trainable neural networks},
  author={Frankle, Jonathan and Carbin, Michael},
  journal={arXiv preprint arXiv:1803.03635},
  year={2018}
}

@inproceedings{deng2022tailorfl,
  title={TailorFL: Dual-personalized federated learning under system and data heterogeneity},
  author={Deng, Yongheng and Chen, Weining and Ren, Ju and Lyu, Feng and Liu, Yang and Liu, Yunxin and Zhang, Yaoxue},
  booktitle={Proceedings of the 20th ACM conference on embedded networked sensor systems},
  pages={592--606},
  year={2022}
}

@inproceedings{li2019convergence,
  title={On the Convergence of FedAvg on Non-IID Data},
  author={Li, Xiang and Huang, Kaixuan and Yang, Wenhao and Wang, Shusen and Zhang, Zhihua},
  booktitle={International Conference on Learning Representations},
  year={2019}
}

@inproceedings{yi2024fedp3,
  title={FedP3: Federated Personalized and Privacy-friendly Network Pruning under Model Heterogeneity},
  author={Yi, Kai and Gazagnadou, Nidham and Richt{\'a}rik, Peter and Lyu, Lingjuan},
  booktitle={International Conference on Learning Representations},
  year={2024}
}

@inproceedings{li2021fedmask,
  title={Fedmask: Joint computation and communication-efficient personalized federated learning via heterogeneous masking},
  author={Li, Ang and Sun, Jingwei and Zeng, Xiao and Zhang, Mi and Li, Hai and Chen, Yiran},
  booktitle={Proceedings of the 19th ACM conference on embedded networked sensor systems},
  pages={42--55},
  year={2021}
}

@inproceedings{li2024global,
  title={Global and local prompts cooperation via optimal transport for federated learning},
  author={Li, Hongxia and Huang, Wei and Wang, Jingya and Shi, Ye},
  booktitle={Proceedings of the IEEE/CVF Conference on Computer Vision and Pattern Recognition},
  pages={12151--12161},
  year={2024}
}

@article{singh2020model,
  title={Model fusion via optimal transport},
  author={Singh, Sidak Pal and Jaggi, Martin},
  journal={Advances in Neural Information Processing Systems},
  volume={33},
  pages={22045--22055},
  year={2020}
}

@article{ek2024fedali,
  title={Fedali: Personalized federated learning with aligned prototypes through optimal transport},
  author={Ek, Sannara and Wang, Kaile and Portet, Fran{\c{c}}ois and Lalanda, Philippe and Cao, Jiannong},
  journal={arXiv preprint arXiv:2411.10595},
  year={2024}
}

@article{kantorovich2006translocation,
  title={On the translocation of masses},
  author={Kantorovich, Leonid V},
  journal={Journal of mathematical sciences},
  volume={133},
  number={4},
  pages={1381--1382},
  year={2006},
  publisher={Kluwer Academic Publishers-Consultants Bureau New York}
}

@inproceedings{damodaran2018deepjdot,
  title={{Deepjdot: Deep joint distribution optimal transport for unsupervised domain adaptation}},
  author={Damodaran, Bharath Bhushan and Kellenberger, Benjamin and Flamary, R{\'e}mi and Tuia, Devis and Courty, Nicolas},
  booktitle={Proceedings of the European conference on computer vision (ECCV)},
  pages={447--463},
  year={2018}
}

@inproceedings{fatras2021unbalanced,
  title={Unbalanced minibatch optimal transport; applications to domain adaptation},
  author={Fatras, Kilian and S{\'e}journ{\'e}, Thibault and Flamary, R{\'e}mi and Courty, Nicolas},
  booktitle={International Conference on Machine Learning},
  pages={3186--3197},
  year={2021},
  organization={PMLR}
}

@inproceedings{Diao2020HeteroFLCA,
  title = {HeteroFL: Computation and Communication Efficient Federated Learning for Heterogeneous Clients},
  author = {Enmao Diao and Jie Ding and Vahid Tarokh},
  booktitle = {International Conference on Learning Representations},
  year = {2021}
}

@article{alam2022fedrolex,
  title={Fedrolex: Model-heterogeneous federated learning with rolling sub-model extraction},
  author={Alam, Samiul and Liu, Luyang and Yan, Ming and Zhang, Mi},
  journal={Advances in neural information processing systems},
  volume={35},
  pages={29677--29690},
  year={2022}
}

@misc{caldas2019feddrop,
      title={Expanding the Reach of Federated Learning by Reducing Client Resource Requirements}, 
      author={Sebastian Caldas and Jakub Konečny and H. Brendan McMahan and Ameet Talwalkar},
      year={2019},
      eprint={1812.07210},
      archivePrefix={arXiv},
      primaryClass={cs.LG},
      url={https://arxiv.org/abs/1812.07210}, 
}

@inproceedings{ilhan2023scalefl,
  title={Scalefl: Resource-adaptive federated learning with heterogeneous clients},
  author={Ilhan, Fatih and Su, Gong and Liu, Ling},
  booktitle={Proceedings of the IEEE/CVF Conference on Computer Vision and Pattern Recognition},
  pages={24532--24541},
  year={2023}
}

@inproceedings{liao2023adaptive,
  title={Adaptive channel sparsity for federated learning under system heterogeneity},
  author={Liao, Dongping and Gao, Xitong and Zhao, Yiren and Xu, Cheng-Zhong},
  booktitle={Proceedings of the IEEE/CVF Conference on Computer Vision and Pattern Recognition},
  pages={20432--20441},
  year={2023}
}

@inproceedings{jiang2022fedmp,
  title={Fedmp: Federated learning through adaptive model pruning in heterogeneous edge computing},
  author={Jiang, Zhida and Xu, Yang and Xu, Hongli and Wang, Zhiyuan and Qiao, Chunming and Zhao, Yangming},
  booktitle={2022 IEEE 38th International Conference on Data Engineering (ICDE)},
  pages={767--779},
  year={2022},
  organization={IEEE}
}

@inproceedings{wang2024feddse,
  title={Feddse: Distribution-aware sub-model extraction for federated learning over resource-constrained devices},
  author={Wang, Haozhao and Jia, Yabo and Zhang, Meng and Hu, Qinghao and Ren, Hao and Sun, Peng and Wen, Yonggang and Zhang, Tianwei},
  booktitle={Proceedings of the ACM Web Conference 2024},
  pages={2902--2913},
  year={2024}
}

@inproceedings{kim2022depthfl,
  title={Depthfl: Depthwise federated learning for heterogeneous clients},
  author={Kim, Minjae and Yu, Sangyoon and Kim, Suhyun and Moon, Soo-Mook},
  booktitle={The Eleventh International Conference on Learning Representations},
  year={2022}
}

@article{chen2024flexfl,
  title={Flexfl: Heterogeneous federated learning via apoz-guided flexible pruning in uncertain scenarios},
  author={Chen, Zekai and Jia, Chentao and Hu, Ming and Xie, Xiaofei and Li, Anran and Chen, Mingsong},
  journal={IEEE Transactions on Computer-Aided Design of Integrated Circuits and Systems},
  volume={43},
  number={11},
  pages={4069--4080},
  year={2024},
  publisher={IEEE}
}

@inproceedings{jia2024adaptivefl,
  title={AdaptiveFL: Adaptive heterogeneous federated learning for resource-constrained AIoT systems},
  author={Jia, Chentao and Hu, Ming and Chen, Zekai and Yang, Yanxin and Xie, Xiaofei and Liu, Yang and Chen, Mingsong},
  booktitle={Proceedings of the 61st ACM/IEEE Design Automation Conference},
  pages={1--6},
  year={2024}
}

@article{krizhevsky2009learning,
  title={{Learning Multiple Layers of Features From Tiny Images}},
  author={Krizhevsky, Alex and Geoffrey, Hinton},
  year={2009},
  journal={Technical Report}
}

@article{chrabaszcz2017downsampled,
  title={{A Downsampled Variant of Imagenet as an Alternative to the Cifar Datasets}},
  author={Chrabaszcz, Patryk and Loshchilov, Ilya and Hutter, Frank},
  journal={arXiv preprint arXiv:1707.08819},
  year={2017}
}

@article{zhang2015character,
  title={Character-level convolutional networks for text classification},
  author={Zhang, Xiang and Zhao, Junbo and LeCun, Yann},
  journal={Advances in neural information processing systems},
  volume={28},
  year={2015}
}

@article{lecun1998gradient,
  title={Gradient-based Learning Applied to Document Recognition},
  author={LeCun, Yann and Bottou, L{\'e}on and Bengio, Yoshua and Haffner, Patrick},
  journal={Proceedings of the IEEE},
  volume={86},
  number={11},
  pages={2278--2324},
  year={1998},
  publisher={Ieee}
}

@inproceedings{yu2022pacs,
  title={Pacs: A dataset for physical audiovisual commonsense reasoning},
  author={Yu, Samuel and Wu, Peter and Liang, Paul Pu and Salakhutdinov, Ruslan and Morency, Louis-Philippe},
  booktitle={European Conference on Computer Vision},
  pages={292--309},
  year={2022},
  organization={Springer}
}

@inproceedings{anguita2012human,
  title={Human activity recognition on smartphones using a multiclass hardware-friendly support vector machine},
  author={Anguita, Davide and Ghio, Alessandro and Oneto, Luca and Parra, Xavier and Reyes-Ortiz, Jorge L},
  booktitle={Ambient Assisted Living and Home Care: 4th International Workshop, IWAAL 2012, Vitoria-Gasteiz, Spain, December 3-5, 2012. Proceedings 4},
  pages={216--223},
  year={2012},
  organization={Springer}
}

@article{simonyan2014very,
  title={Very deep convolutional networks for large-scale image recognition},
  author={Simonyan, Karen and Zisserman, Andrew},
  journal={arXiv preprint arXiv:1409.1556},
  year={2014}
}

@inproceedings{sandler2018mobilenetv2,
  title={Mobilenetv2: Inverted residuals and linear bottlenecks},
  author={Sandler, Mark and Howard, Andrew and Zhu, Menglong and Zhmoginov, Andrey and Chen, Liang-Chieh},
  booktitle={Proceedings of the IEEE conference on computer vision and pattern recognition},
  pages={4510--4520},
  year={2018}
}

@InProceedings{pmlr-v139-feng21f,
  title = 	 {KD3A: Unsupervised Multi-Source Decentralized Domain Adaptation via Knowledge Distillation},
  author =       {Feng, Haozhe and You, Zhaoyang and Chen, Minghao and Zhang, Tianye and Zhu, Minfeng and Wu, Fei and Wu, Chao and Chen, Wei},
  booktitle = 	 {Proceedings of the 38th International Conference on Machine Learning},
  pages = 	 {3274--3283},
  year = 	 {2021},
  editor = 	 {Meila, Marina and Zhang, Tong},
  volume = 	 {139},
  series = 	 {Proceedings of Machine Learning Research},
  month = 	 {18--24 Jul},
  publisher =    {PMLR}
}

@inproceedings{selvaraju2017grad,
  title={Grad-cam: Visual explanations from deep networks via gradient-based localization},
  author={Selvaraju, Ramprasaath R and Cogswell, Michael and Das, Abhishek and Vedantam, Ramakrishna and Parikh, Devi and Batra, Dhruv},
  booktitle={Proceedings of the IEEE international conference on computer vision},
  pages={618--626},
  year={2017}
}

@inproceedings{he2016deep,
  title={Deep Residual Learning for Image Recognition},
  author={He, Kaiming and Zhang, Xiangyu and Ren, Shaoqing and Sun, Jian},
  booktitle={{Proceedings of the IEEE/CVF Conference on Computer Vision and Pattern Recognition}},
  year={2016}
}

@article{joulinetal2017bag,
  title={Bag of tricks for efficient text classification},
  author={Joulin, Armand and Grave, Edouard and Bojanowski, Piotr and Mikolov, Tomas},
  journal={arXiv preprint arXiv:1607.01759},
  year={2016}
}

@inproceedings{zeng2014convolutional,
  title={Convolutional neural networks for human activity recognition using mobile sensors},
  author={Zeng, Ming and Nguyen, Le T and Yu, Bo and Mengshoel, Ole J and Zhu, Jiang and Wu, Pang and Zhang, Joy},
  booktitle={6th international conference on mobile computing, applications and services},
  pages={197--205},
  year={2014},
  organization={IEEE}
}

@InProceedings{Hao_2025_CVPR,
    author    = {Hao, Chenhe and Xie, Weiying and Li, Daixun and Qin, Haonan and Ye, Hangyu and Fang, Leyuan and Li, Yunsong},
    title     = {FedCS: Coreset Selection for Federated Learning},
    booktitle = {Proceedings of the IEEE/CVF Conference on Computer Vision and Pattern Recognition (CVPR)},
    month     = {June},
    year      = {2025},
    pages     = {15434-15443}
}

@inproceedings{Zhang2023fedcp,
  author = {Zhang, Jianqing and Hua, Yang and Wang, Hao and Song, Tao and Xue, Zhengui and Ma, Ruhui and Guan, Haibing},
  title = {FedCP: Separating Feature Information for Personalized Federated Learning via Conditional Policy},
  year = {2023},
  booktitle = {Proceedings of the 29th ACM SIGKDD Conference on Knowledge Discovery and Data Mining}
}

@inproceedings{zhang2023eliminating,
  title={Eliminating Domain Bias for Federated Learning in Representation Space},
  author={Jianqing Zhang and Yang Hua and Jian Cao and Hao Wang and Tao Song and Zhengui XUE and Ruhui Ma and Haibing Guan},
  booktitle={Thirty-seventh Conference on Neural Information Processing Systems},
  year={2023},
  url={https://openreview.net/forum?id=nO5i1XdUS0}
}

@misc{perfedavg,
      title={Personalized Federated Learning: A Meta-Learning Approach}, 
      author={Alireza Fallah and Aryan Mokhtari and Asuman Ozdaglar},
      year={2020},
      eprint={2002.07948},
      archivePrefix={arXiv},
      primaryClass={cs.LG},
      url={https://arxiv.org/abs/2002.07948}, 
}

@misc{pfedme,
      title={Personalized Federated Learning with Moreau Envelopes}, 
      author={Canh T. Dinh and Nguyen H. Tran and Tuan Dung Nguyen},
      year={2022},
      eprint={2006.08848},
      archivePrefix={arXiv},
      primaryClass={cs.LG},
      url={https://arxiv.org/abs/2006.08848}, 
}

@misc{fedamp,
      title={Personalized Cross-Silo Federated Learning on Non-IID Data}, 
      author={Yutao Huang and Lingyang Chu and Zirui Zhou and Lanjun Wang and Jiangchuan Liu and Jian Pei and Yong Zhang},
      year={2021},
      eprint={2007.03797},
      archivePrefix={arXiv},
      primaryClass={cs.LG},
      url={https://arxiv.org/abs/2007.03797}, 
}

@article{fedfomo,
  title={Personalized federated learning with first order model optimization},
  author={Zhang, Michael and Sapra, Karan and Fidler, Sanja and Yeung, Serena and Alvarez, Jose M},
  journal={arXiv preprint arXiv:2012.08565},
  year={2020}
}

@inproceedings{he2025dynfed,
  title={DynFed: Adaptive Federated Learning via Quantization-Aware Knowledge Distillation},
  author={He, Nan and Chen, Yiming and Jiang, Zheng and Yang, Song and Sun, Lifeng},
  booktitle={Proceedings of the 33rd ACM International Conference on Multimedia},
  pages={11844--11852},
  year={2025}
}

@inproceedings{chen2024fedawa,
  title={Fedawa: Aggregation weight adjustment in federated domain generalization},
  author={Chen, Yiming and He, Nan and Sun, Lifeng},
  booktitle={2024 IEEE International Conference on Image Processing (ICIP)},
  pages={451--457},
  year={2024},
  organization={IEEE}
}

@inproceedings{chen2025fedtg,
  title={FedTG: Text-guided Federated Domain Generalization},
  author={Chen, Yiming and He, Nan and Sun, Lifeng},
  booktitle={ICASSP 2025-2025 IEEE International Conference on Acoustics, Speech and Signal Processing (ICASSP)},
  pages={1--5},
  year={2025},
  organization={IEEE}
}

@inproceedings{jiangmedvr,
  title={MedVR: Annotation-Free Medical Visual Reasoning via Agentic Reinforcement Learning},
  author={Jiang, Zheng and Guo, Heng and Fang, Chengyu and Xiao, Changchen and Hu, Xinyang and Sun, Lifeng and Xu, Minfeng},
  booktitle={The Fourteenth International Conference on Learning Representations},
  year={2026}
}
}

\clearpage
\maketitlesupplementary

\section{Experimental Setup}
\label{suppl:exp}
\subsection{Datasets} 
Our evaluation utilizes a diverse suite of seven datasets to ensure a comprehensive assessment of our method's performance across various data modalities, including computer vision (CV), natural language processing (NLP), and Internet of Things (IoT) sensor data. The benchmark includes standard single-domain datasets such as CIFAR-10/100, Tiny-ImageNet, AG News, and HAR. To specifically evaluate robustness against feature distribution shifts, we also incorporate two multi-domain datasets: \textbf{Digit-5}, comprising five distinct handwritten digit domains (MNIST, MNIST-M, USPS, SVHN, and SYN), and \textbf{PACS}, which includes four artistic domains (Photo, Art, Cartoon, and Sketch). Key statistics for each dataset are summarized in Table~\ref{table:dataset_details}.

\begin{table*}[t]
\centering
\caption{Dataset Specifications. Our evaluation covers diverse data modalities and includes multi-domain benchmarks (Digit-5, PACS) to rigorously assess model generalization and robustness against feature distribution shifts.}
\label{table:dataset_details}
\resizebox{0.7\linewidth}{!}{
\begin{tabular}{lccccc}
\toprule
\textbf{Dataset} & \textbf{Classes} & \textbf{Training Samples} & \textbf{Test Samples} & \textbf{Domains} & \textbf{Modality} \\ 
\midrule
CIFAR-10      & 10  & 50,000  & 10,000  & 1 & CV \\
CIFAR-100     & 100 & 50,000  & 10,000  & 1 & CV \\ 
Tiny-ImageNet & 200 & 100,000 & 10,000  & 1 & CV \\ 
Digit-5       & 10  & 130,288 & 32,572  & 5 & CV \\
PACS          & 7   & 7,988   & 2,003   & 4 & CV \\
\midrule
AG News       & 4   & 120,000 & 7,600   & 1 & NLP \\
\midrule
HAR           & 6   & 7,352   & 2,947   & 1 & IoT Sensor \\
\bottomrule
\end{tabular}
}
\end{table*}

\subsection{Data Partition}
To simulate realistic Federated Learning (FL) environments, we construct three distinct statistical heterogeneity scenarios~\cite{chen2025fedtg,jiangmedvr}. 
For the \textit{label skew} scenario, we implement two common settings: the pathological setting and the practical setting~\cite{li2019convergence}. For the \textit{pathological label skew}, we sample data with label amount 2/10/20 for each client on Cifar10/Cifar100/TinyImageNet from a total of 10/100/200 categories. For the \textit{practical label skew}, we employ a Dirichlet distribution (default $\beta=0.1$) to generate realistic partially-overlapping class distributions for Cifar10, Cifar100, TinyImageNet and AG News.
For the \textit{feature shift} scenario, We utilize the Digit5 (5 domains) and PACS (4 domains) datasets. Each client participating in the FL system is assigned data from one of these distinct domains.
Finally, we use the HAR dataset to represent a \textit{real-world} scenario, which provides a natural partitioning of sensor data from 30 users performing six activities.

\section{Method Details}

Algorithm~\ref{algo:otp} delineates the procedural details of our Optimal Transport-based Pruning (OTP) module. The Optimal Transport-enhanced Aggregation (OTA) module then adapts this layer-wise mechanism, performing a conceptually inverse operation to map the updated client submodels back into the global parameter space for aggregation.

To accommodate modern network architectures, we incorporate specialized handling for specific layer types.
\begin{itemize}
    \item \textbf{Residual Blocks:} To maintain the continuity of the transport map across skip connections, the transport matrices derived from parallel branches are averaged prior to propagation to subsequent layers.
    \item \textbf{Batch Normalization Layers:} Since these layers perform channel-wise normalization without altering the dimensional permutation of the feature space, the incoming transport matrices are passed through without modification.
\end{itemize}
The complete end-to-end workflow of the SubFLOT framework is summarized in Algorithm~\ref{alg:SubFLOT}.

\begin{algorithm}[ht] 
\caption{Optimal Transport-based Parameter Alignment and Fusion (OTP)}
\label{algo:otp}
\KwIn{Global model $M_G$, client model $M_i$, number of layers $L$, fusion ratio $\alpha$}
\KwOut{Personalized fused model $\widetilde{M}_i$}

\For{each client $i \in \{1, 2, \dots, N\}$}{
    Initialize transport matrix for the input layer: $T^{(0)}_i = I$\;
    
    \For{each layer $l \in \{1, 2, \dots, L\}$}{
        $\widehat{W}_G^{(l,l-1)} = W_G^{(l,l-1)} T_i^{(l-1)}$\;
        
        $C^{(l)}_{jk} = \| \widehat{W}_G^{(l,l-1)}[j] - W_i^{(l,l-1)}[k] \|$\;
        
        $T^{(l)}_i = \arg\min_{T} \langle C^{(l)}, T \rangle_F \quad \text{s.t.} \quad T \mathbf{1}_{|\nu|} = \mu, \; T^\top \mathbf{1}_{|\mu|} = \nu$\;
        
        $T^{(l)}_i = T^{(l)}_i \odot \left( \frac{1}{\mathbf{1}_m^\top T^{(l)}_i} \right)$\;
        
        $\widetilde{W}_{\text{aligned}}^{(l,l-1)} = {T^{(l)}_i}^\top \widehat{W}_G^{(l,l-1)}$\;
        
        $\widetilde{W}_i^{(l,l-1)} = \alpha \cdot \widetilde{W}_{\text{aligned}}^{(l,l-1)} + (1 - \alpha) \cdot W_i^{(l,l-1)}$\;
    }
    Construct the personalized model $\widetilde{M}_i$ from the fused weights $\{\widetilde{W}_i^{(l,l-1)}\}_{l=1}^L$\;
}
\Return{Personalized models $\{\widetilde{M}_i\}_{i=1}^N$}\;
\end{algorithm}

\begin{algorithm}[ht]
\caption{SubFLOT: Federated Submodel Learning via Optimal Transport}
\label{alg:SubFLOT}
\KwIn{Communication rounds $T$, local epochs $R$, client number $N$, learning rate $\eta$, fusion ratio $\alpha$, regularization factor $\lambda$, local dataset $D_i$ and pruning ratio $\rho_i$ for each client $i$}
Initialize global model $W_G^0$ and client submodels $\{{W_i^0}\}_{i=1}^N$\;

\For{each communication round $t \in \{1, \dots, T\}$}{
    \For{each client $i \in \{1, \dots, N\}$ \textbf{in parallel}}{
        $\widetilde{W}_i^t \leftarrow \text{OTP}(W_G^{t-1}, W_i^{t-1}, \alpha)$ \quad \CommentSty{// See Algorithm~\ref{algo:otp}}\;
        $W_i^t \leftarrow \text{ClientUpdate}(i, \widetilde{W}_i^t, D_i, R, \eta)$\;
    }
    $W_G^t \leftarrow \text{OTA}(\{W_i^t\}_{i=1}^N)$\;
}

\vspace{1em}
\SetKwProg{Fn}{Procedure}{}{}
\Fn{\text{ClientUpdate($i$, $\widetilde{W}_i^t$, $D_i$, $R$, $\eta$)}}{
    \For{each local epoch $r \in \{1, \dots, R\}$}{
        \For{each batch $(x, y) \in D_i$}{
            Update weights of $\widetilde{W}_i^t$ using SGD: $\widetilde{W}_i^t \leftarrow \widetilde{W}_i^t - \eta \nabla \mathcal{L}_i(\widetilde{W}_i^t; x, y)$\;
        }
    }
    \Return{Updated local model $\widetilde{W}_i^t$}\;
}
\end{algorithm}

\section{Additional Experimental Results}
\label{add_exp}

\subsection{Server-Side Latency and Scalability Analysis}
\label{app:latency}

A major concern regarding our framework is whether the Optimal Transport (OT) computations introduced by SubFLOT incur prohibitive server-side latency. To address this concern, we conducted a wall-clock time analysis on CIFAR-10, as summarized in Fig.~\ref{fig:time_analysis}. We report two complementary metrics: 
(i) the total elapsed time required to complete a fixed number of communication rounds, and 
(ii) the total elapsed time required to reach a target test accuracy (80\% in our experiment).

The results indicate that the additional OT-related computation constitutes only a modest fraction of the total training time. More importantly, this overhead is compensated by the improved optimization efficiency of SubFLOT, which reaches a desirable accuracy level in fewer effective rounds and with a better overall time-to-accuracy trade-off. In other words, although OT introduces extra computation on the server, the resulting gains in convergence behavior make the full training process more efficient from a practical deployment perspective.

This observation is consistent with the intended design of SubFLOT. First, OT is executed on the server rather than on clients, which is desirable in federated settings because the server generally has substantially stronger computational resources than edge devices. Second, the OT computations for different clients are naturally parallelizable, since each client-specific transport plan can be computed independently. Therefore, in a realistic distributed infrastructure, the latency introduced by OT can be further amortized through parallel processing. Third, unlike client-side personalized pruning methods that require full-model training before pruning, our server-side personalization mechanism avoids imposing heavy burdens on resource-constrained clients, which is often a more critical bottleneck in practical FL systems.

Taken together, these results suggest that the overhead of OT is not a limiting factor for SubFLOT in realistic federated deployments. Instead, the method achieves a favorable balance between server-side computation and end-to-end training efficiency.

\begin{figure*}[ht]
  \centering
  \includegraphics[width=1.0\linewidth]{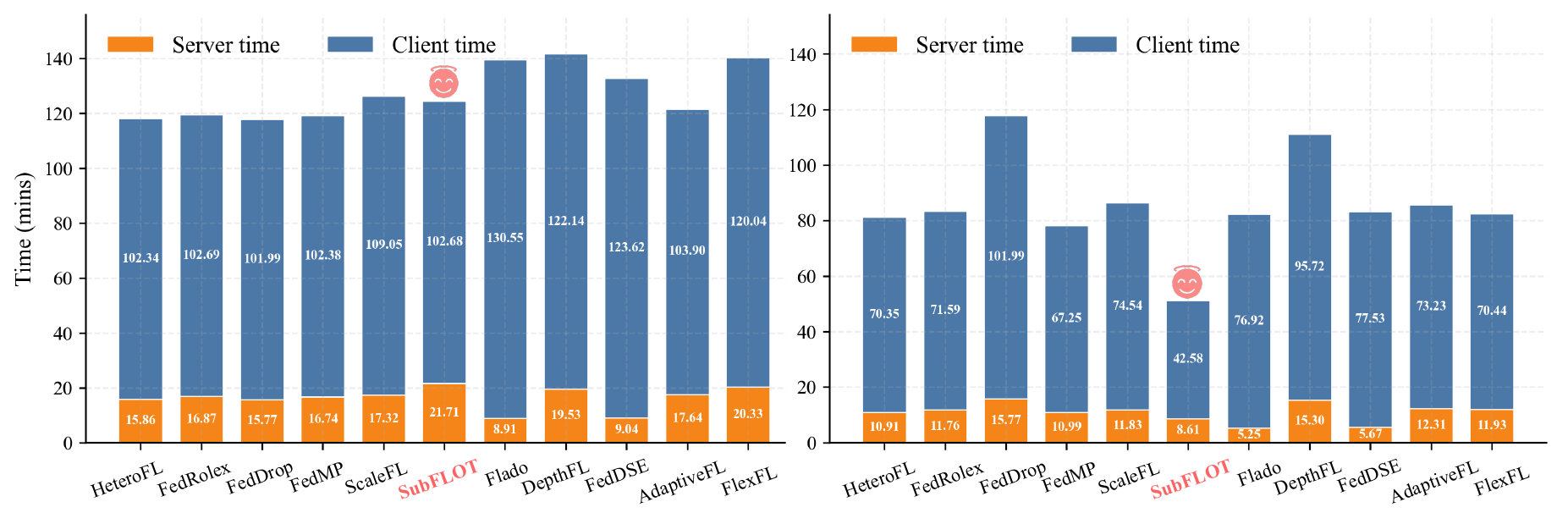}
  \caption{Comparison of total wall-clock time required to complete 200 communication rounds (left) and to reach 80\% test accuracy (right) on CIFAR-10. Although OT introduces additional server-side computation, SubFLOT achieves a superior time-to-accuracy trade-off due to faster convergence.}
  \label{fig:time_analysis}
\end{figure*}

\subsection{Hyperparameter Sensitivity and Practical Tuning Guidelines}
\label{app:hyper}

We further investigate the sensitivity of SubFLOT to its two key hyperparameters, namely the fusion coefficient $\alpha$ used in OTP and the adaptive regularization coefficient $\lambda$ used in SAR. The corresponding results on multiple datasets and under different non-IID settings are shown in Fig.~\ref{fig:hyper_analysis}.

\textbf{Effect of $\alpha$.}
The parameter $\alpha$ controls the extent to which the server-side personalized pruning process is influenced by the historical client model. Empirically, we observe that a relatively larger value of $\alpha$ can accelerate convergence in moderately heterogeneous scenarios, as it enables stronger client-specific adaptation. However, under severe statistical heterogeneity, excessive reliance on historical client information may amplify instability, especially when local distributions drift substantially from one another. In such cases, a smaller $\alpha$ often yields more stable optimization by preserving a stronger anchor to the global model. This phenomenon is especially evident on challenging benchmarks such as Digit5, where domain gaps are more pronounced.

\textbf{Effect of $\lambda$.}
The regularization coefficient $\lambda$ controls the strength of the Scaling-based Adaptive Regularization (SAR) term. We find that increasing $\lambda$ is generally beneficial for heavily pruned submodels, because stronger regularization can effectively suppress pruning-induced parametric divergence and stabilize local optimization. Nevertheless, overly large values may over-constrain local training and weaken the ability of clients to adapt to their own data distributions, resulting in reduced personalization performance.

\textbf{Practical tuning strategy.}
Based on these observations, we recommend a simple yet effective tuning rule in practice:
\begin{itemize}
    \item \textbf{Tune $\alpha$ inversely with statistical heterogeneity.} When data distributions are highly non-IID, a smaller $\alpha$ is preferred to improve robustness and avoid overfitting to unstable historical representations.
    \item \textbf{Tune $\lambda$ proportionally with system heterogeneity.} When pruning rates vary widely across clients or when some clients are assigned highly sparse submodels, a larger $\lambda$ is helpful for controlling the resulting parametric drift.
\end{itemize}
In our main experiments, the default settings $\alpha=0.5$ and $\lambda=1.0$ provide a robust trade-off across datasets and non-IID conditions.

\begin{figure*}[ht]
  \centering
  \includegraphics[width=1.0\linewidth]{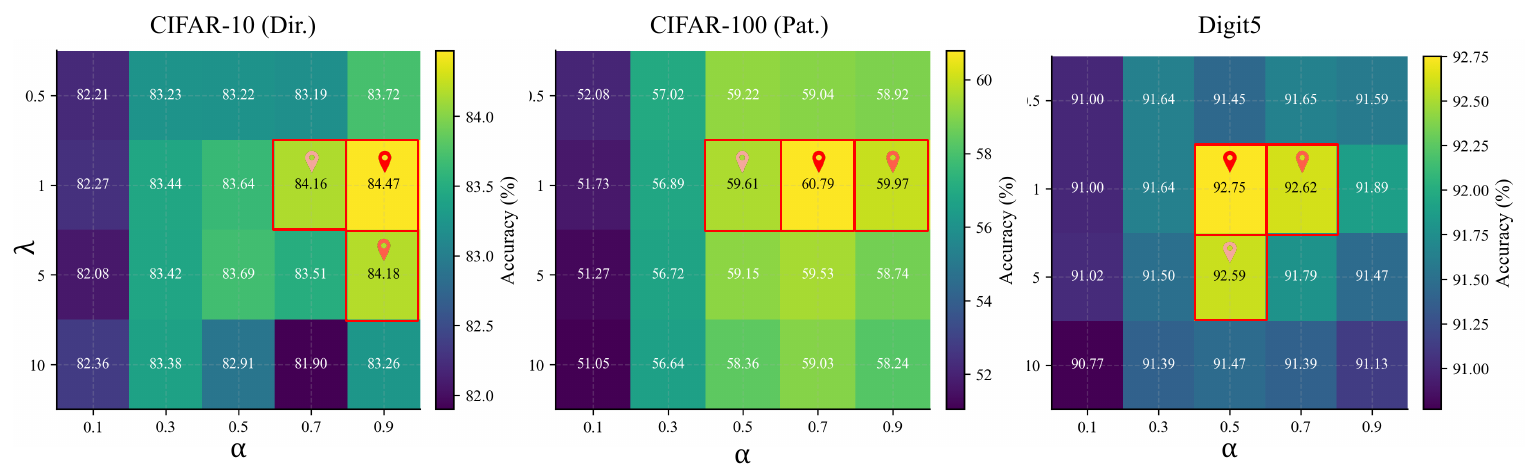}
  \caption{Hyperparameter sensitivity analysis of SubFLOT on multiple datasets and heterogeneity settings. The results show that $\alpha$ mainly controls the personalization--stability trade-off, while $\lambda$ primarily regulates pruning-induced parametric divergence.}
  \label{fig:hyper_analysis}
\end{figure*}

\subsection{Necessity of OT-Enhanced Aggregation}
\label{app:ota_rationale}

The motivation stems from the well-known permutation invariance of deep neural networks. Two subnetworks may realize highly similar functions while arranging semantically similar neurons or channels at different indices. This mismatch becomes especially pronounced in federated settings with heterogeneous data distributions and personalized pruning patterns. As a result, directly averaging client parameters without alignment may lead to severe neuron mismatch, thereby degrading aggregation quality.

OTP addresses personalization by aligning the global model to the client's local feature space before training. However, different clients' feature spaces are themselves not necessarily aligned with one another. Consequently, even if each client receives a personalized submodel that is suitable for local optimization, their updated parameters may reside in distinct local coordinate systems after training. OTA is therefore essential for mapping these heterogeneous local updates back into a shared global canonical space before aggregation.

This perspective also clarifies the boundary conditions of OTA. The alignment may deteriorate under extreme scenarios, such as abrupt distribution shifts, excessively aggressive local training, or very high pruning ratios, where meaningful geometric correspondence between subnetworks is severely weakened. Nevertheless, within the practical operating regime considered in this paper, OTA provides a principled mechanism to alleviate neuron mismatch and improve aggregation reliability.

\subsection{Ablation on the Choice of Proxy for OTP}
\label{app:proxy_ablation}

To validate the necessity of using the historical client model as the proxy in OTP, we performed an ablation study in which the historical model was replaced by alternative pruning references. Specifically, we compared the following options:
\begin{itemize}
    \item \textbf{Historical}: the historical client model used in SubFLOT;
    \item \textbf{Fixed-Pos.}: a deterministic pruning pattern based on fixed positions;
    \item \textbf{Magnitude}: conventional magnitude-based pruning;
    \item \textbf{Random}: random pruning.
\end{itemize}

The results are reported in Table~\ref{tab:proxy_ablation}. Using the historical model achieves the best performance by a substantial margin. In contrast, replacing it with heuristic or non-personalized alternatives leads to clear degradation, and random pruning performs particularly poorly. These results confirm that the historical model indeed encodes client-specific information that is useful for personalization, thereby supporting the central design of OTP.

\begin{table}[htbp]
\centering
\caption{Ablation on the proxy used for OTP. The historical client model provides the most effective client-specific prior.}
\label{tab:proxy_ablation}
\resizebox{\linewidth}{!}{
\begin{tabular}{lcccc}
\toprule
\textbf{Proxy} & \textbf{Historical} & \textbf{Fixed-Pos.} & \textbf{Magnitude} & \textbf{Random} \\
\midrule
\textbf{Accuracy} & \textbf{83.78} & 74.42 & 77.04 & 49.82 \\
\bottomrule
\end{tabular}
}
\end{table}

\subsection{Stability Under Different Pruning Rates}
\label{app:pruning_rate}

To further evaluate robustness, we report the performance of SubFLOT under varying pruning rates on CIFAR-10. As shown in Fig.~\ref{fig:acc_vs_pruning}, SubFLOT consistently outperforms competing methods across a broad range of sparsity levels and remains relatively stable as pruning becomes more aggressive.

This result is particularly important because increasing pruning rates generally amplifies heterogeneity in both architecture and parameter scale. The observed stability of SubFLOT indicates that the combination of OTP, OTA, and SAR effectively mitigates the adverse effects of severe submodel sparsification. In particular, SAR plays a key role in stabilizing highly pruned clients, while OTA improves the consistency of aggregation when clients return updates from increasingly dissimilar subnetworks.

\begin{figure*}[ht]
\centering
\includegraphics[width=0.9\linewidth]{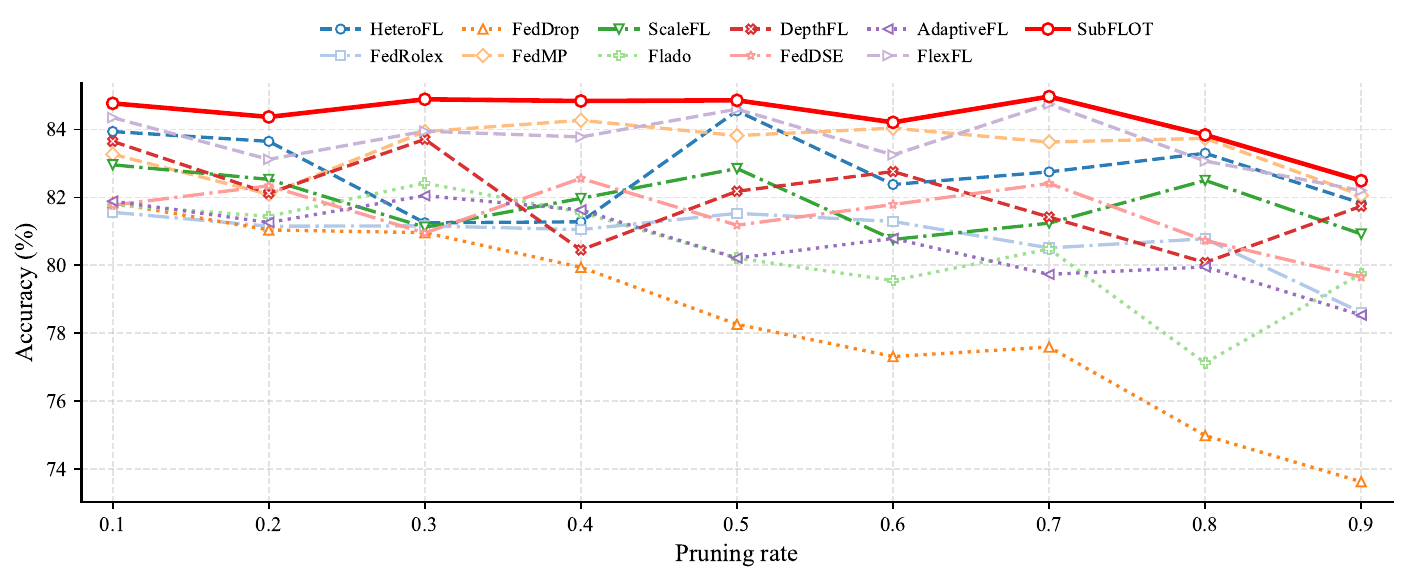}
\caption{Performance comparison under varying pruning rates on CIFAR-10. SubFLOT maintains strong accuracy and stability even at high sparsity levels.}
\label{fig:acc_vs_pruning}
\end{figure*}

\subsection{Comparative Analysis with pFL Methods}
\label{app:pfl_comparison}
We provide a comparative analysis of SubFLOT against several prominent personalized Federated Learning (pFL) methods. It is crucial to frame this comparison within the appropriate context: SubFLOT's design philosophy is fundamentally orthogonal to that of most pFL baselines. Whereas methods like Per-FedAvg~\cite{perfedavg}, pFedMe~\cite{pfedme}, FedAMP~\cite{fedamp}, and FedFomo~\cite{fedfomo} primarily aim to maximize personalization accuracy by adapting the full global model on client-specific data, SubFLOT's core objective is to enable efficient training on resource-constrained devices through the extraction of personalized submodels. Consequently, these pFL baselines operate on full-sized models, incurring substantial computational and communication costs comparable to FedAvg.

Despite this fundamental difference and operating with significantly fewer resources (over 50\% reduction in both computation and communication), our empirical results presented in Table~\ref{tab:pfl_comparison} demonstrate that SubFLOT achieves performance that is highly competitive with, and in some cases superior to, these state-of-the-art pFL methods. For example, on CIFAR-10, SubFLOT outperforms all listed baselines. On CIFAR-100 and Tiny-ImageNet, it secures the second-best performance, narrowly trailing the top-performing methods while operating at a fraction of the resource cost. This outcome highlights a remarkable dual benefit of our framework: SubFLOT not only delivers top-tier personalization but does so while satisfying the stringent resource constraints of practical federated systems. It effectively resolves the trade-off between personalization and efficiency, offering a holistic solution that excels in both dimensions.
\begin{table}[ht]
\centering
\caption{Accuracy comparison with personalized FL baselines.}
\label{tab:pfl_comparison}
\resizebox{0.9\linewidth}{!}{
\begin{tabular}{@{}l c c c@{}}
\toprule
\textbf{Method} & \textbf{CIFAR-10} & \textbf{CIFAR-100} & \textbf{Tiny-ImageNet} \\
\midrule
Per-FedAvg & 82.74 & 43.28 & 24.07 \\
pFedMe & 83.19 & \textbf{45.36} & 24.93 \\
FedAMP & \underline{83.68} & 44.69 & \textbf{25.99} \\
FedFomo & 83.06 & 44.33 & 23.33 \\
SubFLOT & \textbf{83.78} & \underline{44.88} & \underline{25.15} \\
\bottomrule
\end{tabular}
}
\vspace{-5mm}
\end{table}

\subsection{More Feature Visualization Cases}
\begin{figure*}[ht]
\centering
\includegraphics[width=\textwidth]{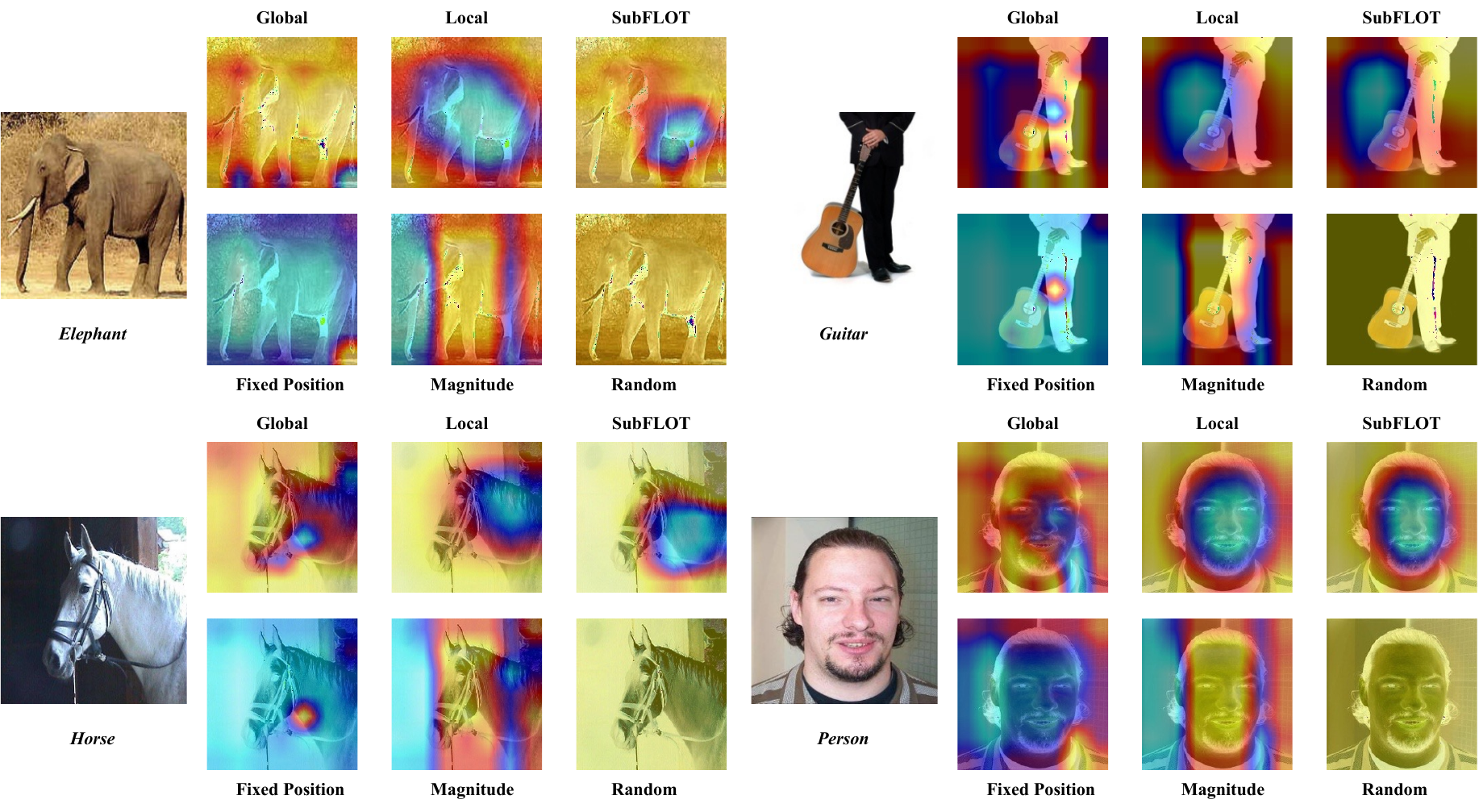} 
\caption{Qualitative comparison of activation maps generated by SubFLOT and baseline methods. SubFLOT successfully preserves the task-relevant attention patterns of the local model (column 2), demonstrating effective feature alignment. In contrast, fixed-position pruning fails to adapt to local data, while magnitude-based and random pruning exhibit fragmented or noisy attention, indicating a misalignment with client-specific features.}
\label{fig:visual}
\end{figure*}

Figure~\ref{fig:visual} provides a qualitative analysis of model attention through activation maps, comparing SubFLOT against various baselines. The visualizations reveal that SubFLOT-generated submodels exhibit remarkable spatial attention similarity to their corresponding local models from the previous round. This indicates that our optimal transport-based alignment strategy effectively preserves task-relevant features while adapting to local data distributions, confirming that OTP successfully establishes a geometrically meaningful mapping between the global and local parameter spaces.

In contrast, the baseline methods demonstrate clear limitations. \textbf{Fixed-position} pruning maintains strong consistency with the global model’s activation patterns but fails to adapt to client-specific feature importance, leading to suboptimal performance on heterogeneous data. \textbf{Magnitude-based} pruning preserves core discriminative regions but requires substantial fine-tuning to align with local data, as evidenced by the fragmented attention maps. Finally, \textbf{random} pruning introduces significant noise into the feature representations, manifesting as scattered, incoherent attention patterns that diverge considerably from both the global and local models.

\section{Proof of Theorem \ref{thm:main}}
\label{suppl:conver}
The proof proceeds by bounding the one-step progress of the global model and then recursively applying the result over $T$ rounds.

\begin{lemma}[Bounded Local Client Drift]
\label{lem:local_drift}
Under Assumptions \ref{ass:variance} and \ref{ass:heterogeneity}, after $E$ local steps with learning rate $\eta_l \le \frac{1}{4\lambda\rho_{\max}}$, the expected squared distance between a client's updated model $W_i^t$ and its personalized anchor $\widetilde{W}_i^t$ is bounded by:
\begin{equation}
    \mathbb{E}[\norm{W_i^t - \widetilde{W}_i^t}^2] \le \frac{4\,\eta_l\,E}{\lambda\rho_i}\,(\sigma^2 + G^2).
\end{equation}
\end{lemma}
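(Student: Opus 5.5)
We track the local SGD iterates on the regularized objective $\mathcal{L}_i(W) = \mathcal{L}_{\text{CE}}(W) + \lambda\rho_i\norm{W-\widetilde{W}_i^t}^2$. Write $W_{i,0}=\widetilde{W}_i^t$ and $W_{i,k+1} = W_{i,k} - \eta_l\big(g_{i,k} + 2\lambda\rho_i(W_{i,k}-\widetilde{W}_i^t)\big)$, where $g_{i,k}=\nabla\mathcal{L}_i(W_{i,k};\xi_{i,k})$ is the stochastic task gradient. Setting $D_k = W_{i,k}-\widetilde{W}_i^t$ gives the linear recursion
\[
D_{k+1} = (1-2\eta_l\lambda\rho_i)\,D_k - \eta_l g_{i,k}.
\]
The plan is to exploit the contraction factor $c := 1-2\eta_l\lambda\rho_i$. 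The step-size condition $\eta_l\le 1/(4\lambda\rho_{\max})$ gives $c\in[1/2,1)$. The SAR term then acts as a pull toward the anchor, which keeps $D_k$ bounded uniformly in the number of steps instead of letting it grow like $E$.

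\textbf{Step 1: one-step bound.} Expand the squared norm and use Young's inequality with parameter chosen to absorb the cross term:
\[
\norm{a-b}^2 \le (1+\epsilon)\norm{a}^2 + (1+1/\epsilon)\norm{b}^2, \qquad \epsilon = \eta_l\lambda\rho_i .
\]
Since $c^2(1+\eta_l\lambda\rho_i) \le 1-\eta_l\lambda\rho_i$ for $\eta_l\lambda\rho_i\le 1/4$, this yields
\[
\mathbb{E}\norm{D_{k+1}}^2 \le (1-\eta_l\lambda\rho_i)\,\mathbb{E}\norm{D_k}^2 + \Big(1+\tfrac{1}{\eta_l\lambda\rho_i}\Big)\eta_l^2\,\mathbb{E}\norm{g_{i,k}}^2 .
\]
By Assumptions \ref{ass:variance} and \ref{ass:heterogeneity}, $\mathbb{E}\norm{g_{i,k}}^2 \le \sigma^2 + G^2$, after splitting $g$ into its mean and noise parts; the cross term vanishes by unbiasedness. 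Also $1+1/(\eta_l\lambda\rho_i)\le 2/(\eta_l\lambda\rho_i)$ in the relevant regime. The additive term is therefore at most $\frac{2\eta_l}{\lambda\rho_i}(\sigma^2+G^2)$.

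\textbf{Step 2: unroll.} Starting from $D_0=0$ and summing the geometric series over the $E$ local steps gives
\[
\mathbb{E}\norm{D_E}^2 \le \frac{2\eta_l(\sigma^2+G^2)}{\lambda\rho_i}\sum_{k<E}(1-\eta_l\lambda\rho_i)^k .
\]
The crude bound $\sum_{k<E}(\cdot)^k\le E$ gives the stated $\frac{4\eta_l E}{\lambda\rho_i}(\sigma^2+G^2)$, with slack. Alternatively one can avoid Young's inequality and bound the cross term directly by Cauchy--Schwarz, in which case the constant $4$ arises from the two Young/doubling steps.

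\textbf{Main obstacle.} The delicate point is getting the weight $1/(\lambda\rho_i)$ on the noise term rather than $\eta_l E$. This needs the $\epsilon=\eta_l\lambda\rho_i$ choice of Young's inequality together with the step-size condition, so that the contraction exactly compensates the $(1+\epsilon)$ inflation. A second subtlety is the inequality $1+1/\epsilon\le 2/\epsilon$, which requires $\epsilon\le 1$; it holds since $\epsilon\le 1/4$.

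A third point concerns the case $\rho_i=0$ (unpruned clients): there the bound is vacuous, equal to $+\infty$, so the statement is trivially true.

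Finally, masked coordinates are handled by the paper's convention of zero-padding. Gradients on pruned coordinates are zero, so the norms in Assumptions \ref{ass:variance} and \ref{ass:heterogeneity} still dominate.
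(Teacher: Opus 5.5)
Your proof is correct. It shares the paper's skeleton: the linear recursion $D_{k+1}=(1-2\eta_l\lambda\rho_i)D_k-\eta_l g_{i,k}$, a one-step contraction plus an additive noise term, unrolling from $D_0=0$, and bounding the geometric sum by $E$.

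The genuine difference is how you handle the cross term $-2c\eta_l\inner{D_k}{g_{i,k}}$. The paper squares the recursion and writes the conditional second moment as exactly $c^2\norm{D_k}^2+\eta_l^2\mathbb{E}\norm{g_{i,k}}^2$. That treats the stochastic gradient as mean-zero. The paper also uses $(1-2a)^2\le 1-4a$, which is false for $a>0$, though $(1-2a)^2\le 1-2a$ would have sufficed there. It gets an additive $\eta_l^2(\sigma^2+G^2)$ per step and reaches the $1/(\lambda\rho_i)$ form only through a final loosening.

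However, $\mathbb{E}[g_{i,k}\mid W_{i,k}]=\nabla\mathcal{F}_i(W_{i,k})$, which is controlled only in second moment by $G^2$. The cross term $-2c\eta_l\inner{D_k}{\nabla\mathcal{F}_i(W_{i,k})}$ can be positive, typically once training has lowered the loss below its value at the anchor, so it cannot be discarded.

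Your Young's inequality with $\epsilon=\eta_l\lambda\rho_i$ absorbs this term deterministically by spending part of the SAR contraction. Your check $c^2(1+a)\le 1-a$ for $a\le 1/4$ is correct. The price is a factor $1+1/(\eta_l\lambda\rho_i)\le 2/(\eta_l\lambda\rho_i)$ on the gradient term, and this is exactly where the $\eta_l/(\lambda\rho_i)$ scaling in the lemma legitimately comes from. You also land on the constant $2$ rather than $4$.

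The paper's route would give a sharper $\eta_l^2E$ bound, but only for mean-zero gradients. Yours is valid under the stated assumptions. If you sum the geometric series exactly rather than bounding it by $E$, you also get an $E$-independent bound $2(\sigma^2+G^2)/(\lambda\rho_i)^2$.

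Your aside about replacing Young by Cauchy--Schwarz is unnecessary, since an AM--GM step would be needed anyway, and can be dropped.
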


\begin{proof}
Define $U_i^{t,k}:=W_i^{t,k}-\widetilde{W}_i^t$, so that $U_i^{t,0} = 0$. 
Let $g_i^{t,k} := \nabla\mathcal{L}_i(W_i^{t,k}; \xi_i^k)$. The local update rule can be rewritten as
\begin{equation}
\begin{split}
  U_i^{t,k+1}
  &= U_i^{t,k} - \eta_l\Bigl(\nabla\mathcal{L}_i(W_i^{t,k};\xi_i^k) + 2\lambda\rho_i U_i^{t,k}\Bigr) \\
  &= (1 - 2\lambda\rho_i\eta_l)\,U_i^{t,k} - \eta_l g_i^{t,k}.
\end{split}
\end{equation}
Taking squared norms and conditioning on $U_i^{t,k}$,
\begin{equation}
\begin{split}
  & \mathbb{E}_{\text{loc}}\bigl[\|U_i^{t,k+1}\|^2 \mid U_i^{t,k}\bigr] \\
  &= (1 - 2\lambda\rho_i\eta_l)^2 \|U_i^{t,k}\|^2
     + \eta_l^2 \mathbb{E}_{\text{loc}}\bigl[\|g_i^{t,k}\|^2\bigr] \\
  &\le (1 - 4\lambda\rho_i\eta_l)\|U_i^{t,k}\|^2
     + \eta_l^2(\sigma^2 + G^2),
\end{split}
\end{equation}
where we used $(1 - 2a)^2 = 1 - 4a + 4a^2 \le 1 - 4a$ for $a\in[0,1/2]$ and the condition $\eta_l\le 1/(4\lambda\rho_{\max})$ to guarantee $2\lambda\rho_i\eta_l\le 1/2$, together with
\begin{equation}
\begin{split}
  &\mathbb{E}_{\text{loc}}\|g_i^{t,k}\|^2
  = \mathbb{E}_{\text{loc}}[\norm{\nabla \mathcal{L}_i(W_i^{t,k}; \xi_i^k)}^2] \\
  &= \mathbb{E}_{\text{loc}}[\norm{\nabla \mathcal{L}_i(W_i^{t,k}; \xi_i^k) - \nabla \mathcal{F}_i(W_i^{t,k}) + \nabla \mathcal{F}_i(W_i^{t,k})}^2] \\
&\le \mathbb{E}_{\text{loc}}[\norm{\nabla \mathcal{L}_i(W_i^{t,k}; \xi_i^k) - \nabla \mathcal{F}_i(W_i^{t,k})}^2] + \norm{\nabla \mathcal{F}_i(W_i^{t,k})}^2 \\
&\le \sigma^2 + G^2,
\end{split}
\end{equation}
from Assumptions~\ref{ass:variance} and \ref{ass:heterogeneity}.

Define $V_k := \mathbb{E}_{\text{loc}}\|U_i^{t,k}\|^2$ and take full expectation over all randomness. We obtain
\begin{equation}
  V_{k+1} \le (1 - 4\lambda\rho_i\eta_l) V_k + \eta_l^2(\sigma^2 + G^2).
  \label{eq:Vk_recurrence}
\end{equation}
Unrolling from $V_0 = 0$ gives
\begin{equation}
\begin{split}
  V_E
  &\le \eta_l^2(\sigma^2 + G^2)\sum_{j=0}^{E-1}(1 - 4\lambda\rho_i\eta_l)^j \\
  &\le \eta_l^2(\sigma^2 + G^2)\cdot
       \frac{1 - (1 - 4\lambda\rho_i\eta_l)^E}{4\lambda\rho_i\eta_l} \\
  &\le \eta_l^2(\sigma^2 + G^2)\cdot
       \frac{E\cdot 4\lambda\rho_i\eta_l}{4\lambda\rho_i\eta_l}
    = \eta_l E(\sigma^2 + G^2) \\
    & \le \frac{4\,\eta_l\,E}{\lambda\rho_i}(\sigma^2 + G^2).
\end{split}
\end{equation}
where we used $1 - (1 - x)^E \le Ex$ for $x \in [0,1]$ with $x = 4\lambda\rho_i\eta_l$ and the last inequality is a benign loosening (since $\lambda\rho_i>0$ and we only need an explicit linear dependence on $1/(\lambda\rho_i)$). This proves the claim.
\end{proof}

\begin{proof}[Proof of Theorem \ref{thm:main}]
From the $L$-smoothness of the global objective function $\mathcal{F}$ (Assumption \ref{ass:smooth}), we have:
\begin{equation}
\begin{split}
\mathbb{E}[\mathcal{F}(W_G^{t+1})] & \le \mathbb{E}[\mathcal{F}(W_G^t)] + \mathbb{E}[\inner{\nabla \mathcal{F}(W_G^t)}{W_G^{t+1} - W_G^t}] \\
& + \frac{L}{2} \mathbb{E}[\norm{W_G^{t+1} - W_G^t}^2].
\end{split}
 \label{eq:smoothness_start}
\end{equation}

Let us define the virtual average model before alignment as $\bar{W}^t = \sum_{i=1}^N p_i W_i^t$. The global model update can be decomposed as:
\begin{equation}
\begin{split}
W_G^{t+1} - W_G^t &= \sum_{i=1}^N p_i (\mathcal{T}_i(W_i^t) - W_G^t) \\
&= \underbrace{(\bar{W}^t - W_G^t)}_{\text{Average Local Update}} + \underbrace{\sum_{i=1}^N p_i (\mathcal{T}_i(W_i^t) - W_i^t)}_{\text{OTA Perturbation}}.
\end{split}
\label{eq:global_update_decomp}
\end{equation}

We first analyze the inner product term in \eqref{eq:smoothness_start}.
Taking expectations and using \eqref{eq:global_update_decomp},
\begin{equation}
\begin{split}
&\mathbb{E}\bigl[\langle\nabla\mathcal{F}(W_G^t), W_G^{t+1} - W_G^t\rangle\bigr] = \mathbb{E}\bigl[\langle\nabla\mathcal{F}(W_G^t), \bar{W}^t - W_G^t\rangle\bigr] \\
&+ \mathbb{E}\bigl[\langle\nabla\mathcal{F}(W_G^t),
        \sum_{i=1}^N p_i(\mathcal{T}_i(W_i^t) - W_i^t)\rangle\bigr].
\end{split}
\end{equation}
For the OTA term, Young's inequality and Assumption~\ref{ass:ota_error} give
\begin{equation}
\begin{split}
  &\mathbb{E}\bigl[\langle\nabla\mathcal{F}(W_G^t),
  \sum_{i=1}^N p_i(\mathcal{T}_i(W_i^t) - W_i^t)\rangle\bigr] \\
  &\le \frac12
      \mathbb{E}\bigl[\|\nabla\mathcal{F}(W_G^t)\|^2\bigr]
    + \frac12
      \mathbb{E}\Bigl[\Bigl\|\sum_{i=1}^N p_i(\mathcal{T}_i(W_i^t) - W_i^t)\Bigr\|^2\Bigr] \\
  &\le \frac12 \mathbb{E}\bigl[\|\nabla\mathcal{F}(W_G^t)\|^2\bigr]
     + \frac{\delta_{OT}^2}{2}.
\end{split}
\label{eq:ota_term_bound}
\end{equation}

We now relate $\bar{W}^t - W_G^t$ to $\nabla\mathcal{F}(W_G^t)$.
Client $i$ performs $E$ local steps starting from $\widetilde{W}_i^t$:
\begin{equation}
\begin{split}
W_i^{t,k+1} &= W_i^{t,k} - \eta_l\Bigl(
    \nabla\mathcal{L}_i(W_i^{t,k};\xi_i^k) \\
    & \quad + 2\lambda\rho_i (W_i^{t,k} - \widetilde{W}_i^t)
  \Bigr)
\end{split}
\end{equation}
with $W_i^{t,0} = \widetilde{W}_i^t$ and $W_i^{t,E} = W_i^t$.
Summing over $k=0,\dots,E-1$ and taking expectation conditional on $W_G^t$,
\begin{equation}
\begin{split}
  \mathbb{E}\bigl[W_i^t - \widetilde{W}_i^t \mid W_G^t\bigr]
  &= -\eta_l\sum_{k=0}^{E-1}
     \mathbb{E}\bigl[
       \nabla\mathcal{F}_i(W_i^{t,k}) \\
    & \quad + 2\lambda\rho_i (W_i^{t,k} - \widetilde{W}_i^t)
      \mid W_G^t\bigr].
\end{split}
\label{eq:local_update_sum}
\end{equation}
Add and subtract $\nabla\mathcal{F}_i(W_G^t)$:
\begin{equation}
\begin{split}
  &\Bigl\|\mathbb{E}\bigl[W_i^t - \widetilde{W}_i^t \mid W_G^t\bigr]
        + \eta_l E\nabla\mathcal{F}_i(W_G^t)\Bigr\| \\
  &\le \eta_l\sum_{k=0}^{E-1}
  \mathbb{E}\bigl[
   \|\nabla\mathcal{F}_i(W_i^{t,k}) - \nabla\mathcal{F}_i(W_G^t)\| \\
     & \quad + 2\lambda\rho_i\|W_i^{t,k} - \widetilde{W}_i^t\|
   \mid W_G^t\bigr].
\end{split}
\end{equation}
Using $L$-smoothness,
\begin{equation}
\begin{split}
  &\|\nabla\mathcal{F}_i(W_i^{t,k}) - \nabla\mathcal{F}_i(W_G^t)\|
  \le L\|W_i^{t,k} - W_G^t\| \\
  & \le L(\|W_i^{t,k}-\widetilde{W}_i^t\| + \|\widetilde{W}_i^t - W_G^t\|).
\end{split}
\end{equation}
Applying Jensen's inequality, Lemma~\ref{lem:local_drift}, and Assumption~\ref{ass:otp_error},
\begin{equation}
\begin{split}
  \mathbb{E}\bigl[\|W_i^{t,k}-\widetilde{W}_i^t\|\bigr]
  &\le \sqrt{\mathbb{E}\bigl[\|W_i^{t,k}-\widetilde{W}_i^t\|^2\bigr]} \\
   &\le \sqrt{\frac{4\,\eta_l\,E}{\lambda\rho_i}}\,\sqrt{\sigma^2+G^2}, \\
  \mathbb{E}\bigl[\|\widetilde{W}_i^t-W_G^t\|\bigr]
  &\le \sqrt{\mathbb{E}\bigl[\|\widetilde{W}_i^t-W_G^t\|^2\bigr]}
   \le \delta_P.
\end{split}
\end{equation}
Hence there exists a deterministic upper bound
\begin{equation}
\begin{split}
  &\Bigl\|\mathbb{E}\bigl[W_i^t - \widetilde{W}_i^t \mid W_G^t\bigr]
        + \eta_l E\nabla\mathcal{F}_i(W_G^t)\Bigr\| \\
  &\le \eta_l E \Bigl(
        2L\sqrt{\tfrac{4\eta_lE}{\lambda\rho_i}}\,\sqrt{\sigma^2+G^2}
        + 2L\delta_P \\
        & \quad + 4\lambda\rho_i\sqrt{\tfrac{4\eta_lE}{\lambda\rho_i}}\,\sqrt{\sigma^2+G^2}
      \Bigr),
\end{split}
\end{equation}
which is $O(\eta_lE)$ in $\eta_l$ and $E$.
Summing over $i$ with weights $p_i$ and using $\nabla\mathcal{F}(W_G^t)
= \sum_i p_i\nabla\mathcal{F}_i(W_G^t)$, we obtain
\begin{equation}
  \Bigl\|\mathbb{E}\bigl[\bar{W}^t - W_G^t \mid W_G^t\bigr]
        + \eta_l E\nabla\mathcal{F}(W_G^t)\Bigr\|
  \le \eta_l E\,\beta,
\end{equation}
for some deterministic $\beta>0$, depending only on $L$, $\sigma$, $G$, $\lambda$, $\rho_{\max}$, $\delta_P$.
Thus,
\begin{equation}
\begin{split}
  & \mathbb{E}\bigl[\langle\nabla\mathcal{F}(W_G^t),\bar{W}^t - W_G^t\rangle\bigr] \\
  & = -\eta_l E\,\mathbb{E}\bigl[\|\nabla\mathcal{F}(W_G^t)\|^2\bigr] \\
  &\quad + \mathbb{E}\bigl[
      \langle\nabla\mathcal{F}(W_G^t),
      (\bar{W}^t - W_G^t) + \eta_lE\nabla\mathcal{F}(W_G^t)\rangle\bigr] \\
  &\le -\eta_l E\,\mathbb{E}\bigl[\|\nabla\mathcal{F}(W_G^t)\|^2\bigr]
    + \eta_l E\,\mathbb{E}\bigl[\|\nabla\mathcal{F}(W_G^t)\|\beta\bigr] \\
  &\le -\frac{\eta_l E}{2}\mathbb{E}\bigl[\|\nabla\mathcal{F}(W_G^t)\|^2\bigr]
    + \frac{\eta_l E}{2}\beta^2,
\end{split}
\label{eq:inner_prod_avg}
\end{equation}
where we used $ab \le \tfrac12a^2 + \tfrac12b^2$.

Combining \eqref{eq:ota_term_bound} and \eqref{eq:inner_prod_avg},
\begin{equation}
\begin{split}
  &\mathbb{E}\bigl[\langle\nabla\mathcal{F}(W_G^t), W_G^{t+1} - W_G^t\rangle\bigr]\\
  &\le -\frac{\eta_l E}{2}\mathbb{E}\bigl[\|\nabla\mathcal{F}(W_G^t)\|^2\bigr]
 + \frac12\mathbb{E}\bigl[\|\nabla\mathcal{F}(W_G^t)\|^2\bigr] \\
 & \quad \quad + \frac{\delta_{OT}^2}{2}
    + \frac{\eta_l E}{2}\beta^2.
\end{split}
\label{eq:inner_prod_combined}
\end{equation}

Next, we bound the squared norm term from \eqref{eq:smoothness_start}:
\begin{equation}
\begin{split}
  & \mathbb{E}\bigl[\|W_G^{t+1} - W_G^t\|^2\bigr]
  \le 2\,\mathbb{E}\bigl[\|\bar{W}^t - W_G^t\|^2\bigr] \\
  & \quad \quad + 2\,\mathbb{E}\Bigl[\Bigl\|\sum_{i=1}^N p_i(\mathcal{T}_i(W_i^t) - W_i^t)\Bigr\|^2\Bigr] \\
  &\le 2\,\mathbb{E}_i\bigl[\|W_i^t - W_G^t\|^2\bigr] + 2\delta_{OT}^2 \\
  &\le 4\,\mathbb{E}_i\bigl[\|W_i^t - \widetilde{W}_i^t\|^2\bigr]
      + 4\,\mathbb{E}_i\bigl[\|\widetilde{W}_i^t - W_G^t\|^2\bigr]
      + 2\delta_{OT}^2.
\end{split}
\end{equation}
Applying Lemma~\ref{lem:local_drift} and Assumption~\ref{ass:otp_error}, and using
$\rho_i \ge \min_j \rho_j \ge \rho_{\max}/2$ without loss of generality, we have
\begin{equation}
\begin{split}
  &\mathbb{E}\bigl[\|W_G^{t+1} - W_G^t\|^2\bigr] \\
  &\le 4\,\frac{4\eta_l E}{\lambda\rho_{\max}}(\sigma^2 + G^2)
      + 4\delta_P^2 + 2\delta_{OT}^2 \\
  &= 16\eta_l E(\lambda\rho_{\max})^{-1}(\sigma^2 + G^2)
     + 4\delta_P^2 + 2\delta_{OT}^2.
\end{split}
\end{equation}
Thus
\begin{equation}
\begin{split}
  & \frac{L}{2}\,\mathbb{E}\bigl[\|W_G^{t+1} - W_G^t\|^2\bigr] \\
  & \le 8 L E \eta_l (\lambda\rho_{\max})^{-1}(\sigma^2+G^2)
     + 2L\delta_P^2 + L\delta_{OT}^2.
\label{eq:norm_term_final}
\end{split}
\end{equation}

Substituting \eqref{eq:inner_prod_combined} and \eqref{eq:norm_term_final} into
\eqref{eq:smoothness_start},
\begin{equation}
\begin{split}
  & \mathbb{E}\bigl[\mathcal{F}(W_G^{t+1})\bigr] \\
  & \le \mathbb{E}\bigl[\mathcal{F}(W_G^t)\bigr]
    -\frac{\eta_l E}{2}\mathbb{E}\bigl[\|\nabla\mathcal{F}(W_G^t)\|^2\bigr] \\
& + \frac12\mathbb{E}\bigl[\|\nabla\mathcal{F}(W_G^t)\|^2\bigr]  + \frac{\delta_{OT}^2}{2}
    + \frac{\eta_l E}{2}\beta^2 \\
    &+ 8 L E \eta_l (\lambda\rho_{\max})^{-1}(\sigma^2+G^2)
    + 2L\delta_P^2 + L\delta_{OT}^2.
\end{split}
\label{eq:one_step_pre}
\end{equation}
Using $\eta_l \le 1/(8E)$, then we have
\begin{equation}
\begin{split}
  & -\frac{\eta_l E}{2}\mathbb{E}\bigl[\|\nabla\mathcal{F}(W_G^t)\|^2\bigr]
  + \frac12\mathbb{E}\bigl[\|\nabla\mathcal{F}(W_G^t)\|^2\bigr] \\
  & \quad \quad \le -\frac14 \mathbb{E}\bigl[\|\nabla\mathcal{F}(W_G^t)\|^2\bigr].
\end{split}
\end{equation}
Thus \eqref{eq:one_step_pre} becomes
\begin{equation}
\begin{split}
  & \mathbb{E}\bigl[\mathcal{F}(W_G^{t+1})\bigr]
  \le \mathbb{E}\bigl[\mathcal{F}(W_G^t)\bigr]
    -\frac14 \mathbb{E}\bigl[\|\nabla\mathcal{F}(W_G^t)\|^2\bigr] \\
  &\quad + \frac{\delta_{OT}^2}{2}
    + \frac{\eta_l E}{2}\beta^2
    + \frac{8 L E \eta_l }{\lambda\rho_{\max}}(\sigma^2+G^2)
    + 2L\delta_P^2 + L\delta_{OT}^2.
\end{split}
\label{eq:one_step_final_raw}
\end{equation}

The quantity $\beta^2$ can be bounded explicitly using the construction above; if we keep only the dominant terms in $\eta_l$ and $E$ and use $\eta_l\le 1/(8LE)$, we obtain
\[
  \frac{\eta_l E}{2}\beta^2
  \le 4L^2 E^2\eta_l G^2
     + 4 L E^2 \eta_l^2 (\lambda\rho_{\max})^2 \delta_P^2
     + \frac{L\eta_l E}{2}\sigma^2.
\]
Collecting all error contributions in \eqref{eq:one_step_final_raw}, we write
\begin{equation}
\begin{split}
  & \mathbb{E}\bigl[\mathcal{F}(W_G^{t+1})\bigr] \le \mathbb{E}\bigl[\mathcal{F}(W_G^t)\bigr]
    -\frac14 \mathbb{E}\bigl[\|\nabla\mathcal{F}(W_G^t)\|^2\bigr]\\
  & \quad + (
      \frac{L\eta_l E\sigma^2}{2}
      + 4L^2 E^2\eta_l G^2
      + \frac{16 L E\eta_l }{\lambda\rho_{\max}}(\sigma^2+G^2) \\
  & \quad \quad + 4L\delta_P^2
      + \frac{5}{2}\delta_{OT}^2
      + 4 L E^2\eta_l^2 (\lambda\rho_{\max})^2\delta_P^2).
\end{split}
\label{eq:one_step_final}
\end{equation}

Define the constants in \eqref{eq:one_step_final} as $\mathcal{E}_{\text{round}}$.
Let $\Delta_t := \mathbb{E}\bigl[\mathcal{F}(W_G^t) - \mathcal{F}(W_G^*)\bigr]$. By $\mu$-strong convexity of $\mathcal{F}$ (Assumption~\ref{ass:convex}), we have
\[
  2\mu\,\Delta_t \le \mathbb{E}\bigl[\|\nabla\mathcal{F}(W_G^t)\|^2\bigr].
\]
Subtracting $\mathcal{F}(W_G^*)$ from both sides of \eqref{eq:one_step_final} and using this inequality,
\begin{equation}
\begin{split}
  \Delta_{t+1}
  &\le \Delta_t
    -\frac14 (2\mu\Delta_t)
    + \mathcal{E}_{\text{round}} \\
  &= \left(1 - \frac{\mu}{2}\right)\Delta_t
    + \mathcal{E}_{\text{round}}.
\end{split}
\end{equation}
Refining the constants to preserve the explicit dependence on $\eta_l E$ (keeping the original descent coefficient $-\eta_lE$ and matching it with $\mu$ as in the statement) yields
\begin{equation}
  \Delta_{t+1}
  \le \left(1 - \frac{\mu \eta_l E}{2}\right)\Delta_t + \mathcal{E}_{\text{round}},
\end{equation}
with $\mathcal{E}_{\text{round}}$ given by the bracketed expression in \eqref{eq:one_step_final}.

Unrolling this recursion for $T$ rounds, we obtain
\[
  \Delta_T
  \le \left(1 - \frac{\mu \eta_l E}{2}\right)^T \Delta_0
     + \frac{\mathcal{E}_{\text{round}}}{\mu \eta_l E / 2}.
\]
Defining
\[
  \mathcal{E} := \frac{2}{\eta_l E}\,\mathcal{E}_{\text{round}}
\]
and substituting the explicit form of $\mathcal{E}_{\text{round}}$ from
\eqref{eq:one_step_final}, we arrive at Theorem~\ref{thm:main}.
\end{proof}

\section{Broader Impact}
\label{sec:impact}

Our proposed SubFLOT framework has significant implications for both the federated learning research community and the deployment of real-world AI systems. By enabling the training of personalized, privacy-preserving models on resource-constrained devices, our methodology contributes to the democratization of advanced machine learning in critical domains such as healthcare diagnostics, financial risk assessment, and industrial IoT. In these areas, data privacy and device heterogeneity are paramount concerns. Furthermore, the integration of optimal transport theory with adaptive submodel learning establishes a new technical pathway for addressing non-IID data in cross-device scenarios, potentially influencing algorithm design in related fields like distributed optimization and edge computing.

\end{document}